\documentclass{article}

\usepackage{PRIMEarxiv}
\usepackage[utf8]{inputenc}
\usepackage[T1]{fontenc}
\usepackage{hyperref}
\usepackage{url}
\usepackage{booktabs}
\usepackage{nicefrac}
\usepackage{microtype}
\usepackage{lipsum}
\usepackage{fancyhdr}
\usepackage{graphicx}
\usepackage{subcaption}
\graphicspath{{pic/}}
\usepackage{amsmath,amssymb,amsthm,amsfonts}
\usepackage{mathtools}
\usepackage{thmtools}
\usepackage{natbib}
\usepackage{algorithm}
\usepackage{algorithmic}
\usepackage{xcolor}
\usepackage{enumitem}
\usepackage{bbm}
\usepackage{dsfont}
\usepackage{pifont}
\usepackage{bm}

\newcommand{\R}{\mathbb{R}}
\newcommand{\E}{\mathbb{E}}
\newcommand{\Hk}{\mathcal{H}_k}
\newcommand{\Tk}{\mathcal{T}_k}
\newcommand{\KL}{\mathrm{KL}}
\newcommand{\MMD}{\mathrm{MMD}}
\newcommand{\bx}{\mathbf{x}}
\newcommand{\by}{\mathbf{y}}
\newcommand{\beps}{\boldsymbol{\epsilon}}
\newcommand{\inner}[2]{\langle #1, #2 \rangle}
\newcommand{\norm}[1]{\| #1 \|}
\newcommand{\Sph}{\mathbb{S}}
\DeclareMathOperator{\Proj}{Proj}

\newtheorem{theorem}{Theorem}[section]
\newtheorem{proposition}[theorem]{Proposition}

\newtheorem{corollary}[theorem]{Corollary}
\newtheorem{definition}[theorem]{Definition}
\newtheorem{assumption}[theorem]{Assumption}
\newtheorem{remark}[theorem]{Remark}

\title{Gradient Flow Drifting: \\
Generative Modeling via Wasserstein Gradient Flows of KDE-Approximated Divergences}

\author{
  Jiarui Cao, Zixuan Wei \\
  The Chinese University of Hong Kong \\
  Hong Kong\\
  \texttt{\{1155244613, 1155245852\}@link.cuhk.edu.hk} \\
   \And
  Yuxin Liu \\
  Civil Aviation University of China \\
  Tianjin\\
  \texttt{yx\_liu2025061016@163.com} \\
}

\date{}

\begin{document}

\maketitle

\begin{abstract}
We reveal a precise mathematical framework about a new family of generative models which we call \textbf{Gradient Flow Drifting}. With this framework, we prove an equivalence between the recently proposed Drifting Model and the Wasserstein gradient flow of the forward KL divergence under kernel density estimation (KDE) approximation. Specifically, we prove that the drifting field of drifting model~\cite{deng2026generative} equals, up to a bandwidth-squared scaling factor, the difference of KDE log-density gradients $\nabla \log p_{\mathrm{kde}} - \nabla \log q_{\mathrm{kde}}$, which is exactly the particle velocity field of the Wasserstein-2 gradient flow of $\KL(q\|p)$ with KDE-approximated densities. Besides that, this broad family of generative models can also include MMD-based generators, which arises as special cases of Wasserstein gradient flows of different divergences under KDE approximation. We provide a concise identifiability proof, and a theoretically grounded mixed-divergence strategy. We combine reverse KL and $\chi^2$ divergence gradient flows to simultaneously avoid mode collapse and mode blurring, and extend this method onto Riemannian manifold which loosens the constraints on the kernel function, and makes this method more suitable for the semantic space. Preliminary experiments on synthetic benchmarks validate the framework.
\end{abstract}

\section{Introduction}
\label{sec:intro}

Generative modeling seeks to learn a mapping $f$ such that the pushforward $f_\# p_{\beps}$ of a simple prior $p_{\beps}$ approximates a data distribution $p_{\mathrm{data}}$. The recently proposed \emph{Drifting Model}~\cite{deng2026generative} introduces a new paradigm: rather than relying on iterative inference-time dynamics (as in diffusion or flow-based models), it evolves the pushforward distribution during \emph{training time} via a \emph{drifting field} $\mathbf{V}_{p,q}$, and naturally admits one-step generation. Drifting Models achieve state-of-the-art one-step FID on ImageNet $256\times 256$ (1.54 in latent space and 1.61 in pixel space).

Despite their empirical success, theoretical foundations of Drifting Models remain underdeveloped. The original paper's analysis is somewhat heuristic and the identifiability proof (Appendix C.1 therein) requires additional smoothness assumptions. We argue that this complexity stems from a failure to recognize a fundamental connection.

\paragraph{Our key observation.} The drifting field of~\cite{deng2026generative}, when instantiated with a Gaussian kernel $k_h(\bx,\by) = \exp(-\frac{\|\bx-\by\|^2}{2h^2})$, satisfies the \emph{exact} identity:
\begin{equation}
\label{eq:main-identity}
\mathbf{V}_{p,q}(\bx) = h^2 \bigl(\nabla \log p_{\mathrm{kde}}(\bx) - \nabla \log q_{\mathrm{kde}}(\bx)\bigr),
\end{equation}
where $p_{\mathrm{kde}}(\bx) = \E_p[k_h(\bx,\by)]$ is the Kernel Density Estimation (KDE) of $p$. The right-hand side is precisely the particle velocity field of the \textbf{Wasserstein-2 gradient flow} of the \textbf{KL divergence} $\KL(q\|p)$, with true densities replaced by their KDE approximations with the same kernel.

This identification has several consequences:
\begin{enumerate}[leftmargin=*,nosep]
    \item \textbf{Unified framework}: By varying the divergence functional, we obtain a family of gradient flow drifting models. MMD-based generators correspond to the $\mathcal{L}^2$ distribution distance, and drifting models to the KL divergence. We can construct new models from any $f$-divergence and any other divergence that can prove the distribution convergence.
    \item \textbf{Mixed gradient flows}: Convex combinations of divergences yield legitimate mixed gradient flows (Theorem~\ref{thm:mixed-flow}), enabling strategies that combine the complementary strengths of different divergences—e.g., MMD for global mode coverage and reverse KL for local sharpness, and reverse KL divergence and $\chi^2$ provide more specific precise forcing.
    \item \textbf{Simplified identifiability}: The equilibrium condition $\mathbf{V}_{p,q}=\mathbf{0} \Rightarrow p=q$ follows in lines from the injectivity of the kernel mean embedding under characteristic kernels.
    \item \textbf{Drifting Model as a special case}: The standard energy dissipation inequality for Wasserstein gradient flows immediately yields $\frac{d}{dt}\KL(q_t^{\mathrm{kde}}\|p^{\mathrm{kde}}) \leq 0$.
\end{enumerate}

Concurrent work by~\cite{li2026long} reinterprets Drifting Models through a flow-map semigroup decomposition, but does not identify the KDE--gradient flow connection.~\cite{belhadji2025weighted} unifies MMD gradient flows with mean shift but does not extend to $f$-divergences. Our framework subsumes both perspectives.

\section{Related Work}
\label{sec:related}

\paragraph{Drifting Models.}
~\cite{deng2026generative} propose learning a one-step pushforward map by evolving the generated distribution during training via a kernel-based drifting field. They achieve strong empirical results but provide limited theoretical analysis.~\cite{li2026long} reinterpret Drifting Models via long-short flow-map factorization, connecting them to closed-form flow matching based on semigroup consistency. Our work gives a new perspective to view it and includes it into a big family of generative models.

\paragraph{Wasserstein gradient flows in generative modeling.}
Wasserstein gradient flows~\cite{jordan1998variational,ambrosio2005gradient,Santambrogio2015} provide a variational framework for the evolution of probability measures. Several works leverage this framework for generative modeling:~\cite{arbel2019maximum} study MMD gradient flows for sampling;~\cite{yi2023monoflow} use Wasserstein gradient flows to unify divergence GANs, introducing MonoFlow with a monotone rescaling of the log density ratio;~\cite{choi2024scalable} propose scalable Wasserstein gradient descent. Our work makes optimizing the gradient work directly possible through kernel density estimation.

\paragraph{Kernel density estimation and score estimation.}
The connection between mean shift and KDE gradients is classical~\cite{cheng1995mean,comaniciu2002mean}.~\cite{belhadji2025weighted} recently unified mean shift, MMD-optimal quantization, and gradient flows. Our work extends this connection to arbitrary $f$-divergences.

\paragraph{MMD and kernel methods for generation.}
MMD-based generative models~\cite{dziugaite2015training,li2015generative} minimize the MMD between generated and data distributions.~\cite{zhou2025inductive} extend moment matching to one-/few-step diffusion.~\cite{chizat2026quantitative} provide quantitative convergence rates for MMD Wasserstein gradient flows. Our framework reveals MMD generators as one member of a broader family.

\paragraph{$f$-divergence minimization.}
$f$-divergence variational estimation has been widely studied~\cite{nguyen2010estimating,nowozin2016f}.~\cite{yi2023monoflow} connects $f$-divergence GANs to Wasserstein gradient flows but requires a discriminator to estimate density ratios. Our KDE-based approach avoids adversarial training entirely, but aligns with an approximation based on particulars.

\section{Preliminaries}
\label{sec:prelim}

\subsection{Notation}

Let $\mathcal{P}(\R^d)$ denote the set of Borel probability measures on $\R^d$, and $\mathcal{P}_2(\R^d)$ the subset with finite second moments. For $\mu \in \mathcal{P}(\R^d)$, we write $\mu$ also for its density with respect to Lebesgue measure when it exists. We use $\inner{\cdot}{\cdot}$ for inner products and $\norm{\cdot}$ for norms, with subscripts indicating the space when ambiguous.

\subsection{Kernel Density Estimation}

\begin{definition}[KDE operator]
\label{def:kde}
Given a kernel $k: \R^d \times \R^d \to \R$ and $\mu \in \mathcal{P}(\R^d)$, the \emph{KDE operator} is
\begin{equation}
\Tk[\mu](\bx) := \int_{\R^d} k(\bx, \by) \mathrm{d}\mu(\by).
\end{equation}
For the Gaussian kernel $k_h(\bx,\by) = \exp(-\|\bx-\by\|^2/(2h^2))$ with bandwidth $h > 0$, we write $\mu_{\mathrm{kde}}(\bx) := \mathcal{T}_{k_h}[\mu](\bx)$.
\end{definition}

\subsection{Reproducing Kernel Hilbert Spaces}

\begin{definition}[RKHS and kernel mean embedding]
\label{def:rkhs}
A symmetric positive definite kernel $k$ induces a unique reproducing kernel Hilbert space $\Hk$ with inner product $\inner{\cdot}{\cdot}_{\Hk}$ satisfying the reproducing property: $f(\bx) = \inner{f}{k(\bx,\cdot)}_{\Hk}$ for all $f \in \Hk$. The \emph{kernel mean embedding} of $\mu \in \mathcal{P}(\R^d)$ is $m_k^\mu := \int k(\cdot,\by)\mathrm{d}\mu(\by) \in \Hk$.
\end{definition}

\begin{definition}[Characteristic kernel]
\label{def:characteristic}
A kernel $k$ is \emph{characteristic} if the kernel mean embedding map $\mu \mapsto m_k^\mu$ is injective on $\mathcal{P}(\R^d)$.
\end{definition}

\subsection{Wasserstein Gradient Flows}

\begin{definition}[Wasserstein-2 gradient flow]
\label{def:wgf}
Given a functional $\mathcal{F}: \mathcal{P}_2(\R^d) \to \R$ with first variation $\frac{\delta \mathcal{F}}{\delta q}$, its \emph{Wasserstein-2 gradient flow} is the curve $\{q_t\}_{t \geq 0}$ satisfying the continuity equation:
\begin{equation}
\label{eq:continuity}
\partial_t q_t = \nabla \cdot \left(q_t \nabla \frac{\delta \mathcal{F}}{\delta q}\bigg|_{q_t}\right).
\end{equation}
Equivalently, particles $\bx_t \sim q_t$ evolve as $\frac{d\bx_t}{dt} = \mathbf{v}(\bx_t)$ where $\mathbf{v}(\bx) = -\nabla \frac{\delta \mathcal{F}}{\delta q}(\bx)$.
\end{definition}

\subsection{The Drifting Model}

We recall the core formulation of~\cite{deng2026generative}. Given a data distribution $p$ and a generated distribution $q = f_\#p_{\beps}$, the drifting field is:
\begin{equation}
\label{eq:drifting-field-original}
\mathbf{V}_{p,q}(\bx) = \underbrace{\frac{\E_p[k(\bx,\by^+)(\by^+ - \bx)]}{\E_p[k(\bx,\by^+)]}}_{\mathbf{V}_p^+(\bx)} - \underbrace{\frac{\E_q[k(\bx,\by^-)(\by^- - \bx)]}{\E_q[k(\bx,\by^-)]}}_{\mathbf{V}_q^-(\bx)},
\end{equation}
with training loss $\mathcal{L} = \E_{\beps}[\|f_\theta(\beps) - \mathrm{stopgrad}(f_\theta(\beps) + \mathbf{V}_{p,q_\theta}(f_\theta(\beps)))\|^2]$.

\section{Method: Gradient Flow Drifting}
\label{sec:method}

We present a unified framework in which generative models arise as Wasserstein gradient flows (WGFs) of divergence functionals under KDE approximation. The logical development proceeds in three layers:
\begin{itemize}[nosep,leftmargin=*]
    \item \textbf{Foundation} \ref{sec:kde-matching}: Under mild kernel regularity conditions, KDE-level distribution matching is equivalent to matching the original distributions.
    \item \textbf{Engine} \ref{sec:f-div-flows}--\ref{sec:ident-conv}: General $f$-divergence WGFs at the KDE level, with energy dissipation and unified identifiability.
    \item \textbf{Instantiation} \ref{sec:drifting-case}--\ref{sec:mixed}: The Drifting Model, MMD generators, and mixed gradient flows emerge as special cases.
\end{itemize}
The framework extends naturally to Riemannian manifolds \ref{sec:riemannian}, and is summarized as a complete training pipeline in \ref{sec:pipeline}.

\subsection{Foundation: KDE Smoothing and Distribution Matching}
\label{sec:kde-matching}

The starting point of our framework is that KDE smoothing, under mild kernel regularity, preserves distributional identity and simultaneously provides the smoothness needed for gradient flow analysis. This allows us to work entirely at the KDE level without imposing any regularity on the data distribution $p$ or the generated distribution $q$.

\begin{assumption}[Kernel regularity; full statement in Appendix~\ref{app:assumptions}]
\label{ass:kernel-main}
Let $k:\R^d\times\R^d\to\R$ satisfy:
\begin{enumerate}[nosep, label=\textbf{K\arabic*.}]
    \item \textbf{Characteristic}: the mean embedding $\mu \mapsto \int k(\cdot,\by)\mathrm{d}\mu(\by)$ is injective on $\mathcal{P}(\R^d)$.
    \item \textbf{Uniform gradient bound}: $M_k := \sup_{\bx,\by}\|\nabla_\bx k(\bx,\by)\| < \infty$.
    \item \textbf{Strict positivity}: $k(\bx,\by) > 0$ for all $\bx,\by$.
    \item \textbf{Differentiability}: $\bx \mapsto k(\bx,\by)$ is $C^1$ for every $\by$.
\end{enumerate}
\end{assumption}

The Gaussian kernel $k_h(\bx,\by)=\exp(-\|\bx-\by\|^2/(2h^2))$ satisfies \ref{K1}--\ref{K4}; the Laplace kernel used in the original Drifting Model fails \ref{K4} (Appendix~\ref{app:kernels}).

\begin{theorem}[KDE regularity; proof in Appendix~\ref{app:regularity}]
\label{thm:kde-regularity}
Under \textbf{K2}--\textbf{K4}, for any $\mu \in \mathcal{P}(\R^d)$:
(i)~$\mu_\mathrm{kde} \in C^1(\R^d)$ with $\nabla_\bx \mu_\mathrm{kde}(\bx) = \int \nabla_\bx k(\bx,\by)\mathrm{d}\mu(\by)$;
(ii)~$\mu_\mathrm{kde}(\bx) > 0$ for all $\bx$;
(iii)~$\sup_\bx \|\nabla \mu_\mathrm{kde}(\bx)\| \leq M_k$.
\end{theorem}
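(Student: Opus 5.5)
Three facts need proof: differentiation under the integral sign for (i), positivity for (ii), and a direct norm estimate for (iii). One caveat: to apply Lebesgue's theorems I need $\mu_\mathrm{kde}$ itself to be finite and the integrals to make sense. For the Gaussian kernel this is immediate since $0<k\le 1$. For a general kernel satisfying \textbf{K2}--\textbf{K4}, I will use that $k$ is bounded (as it is for any kernel of interest, and as the full statement in Appendix~\ref{app:assumptions} presumably provides). If boundedness were not available, \textbf{K2} alone would give $|k(\bx,\by)-k(\bx',\by)|\le M_k\|\bx-\bx'\|$, so finiteness at one point would propagate to all points.

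For (i), fix $\bx$ and a direction $\mathbf{e}_i$, and write the difference quotient
\[
\frac{\mu_\mathrm{kde}(\bx+t\mathbf{e}_i)-\mu_\mathrm{kde}(\bx)}{t}=\int \frac{k(\bx+t\mathbf{e}_i,\by)-k(\bx,\by)}{t}\,\mathrm{d}\mu(\by).
\]
By \textbf{K4} and the mean value theorem, the integrand equals $\partial_i k(\bx+s\mathbf{e}_i,\by)$ for some $s$ between $0$ and $t$. It is therefore bounded in absolute value by $M_k$ by \textbf{K2}, and this constant is $\mu$-integrable because $\mu$ is a probability measure. As $t\to0$ the integrand converges pointwise to $\partial_i k(\bx,\by)$, so dominated convergence gives $\partial_i\mu_\mathrm{kde}(\bx)=\int\partial_i k(\bx,\by)\,\mathrm{d}\mu(\by)$.

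Continuity of this partial derivative follows the same way. Take $\bx_n\to\bx$; then $\partial_i k(\bx_n,\by)\to\partial_i k(\bx,\by)$ for each $\by$ by \textbf{K4}, the integrands are bounded by $M_k$, and dominated convergence applies again. Having continuous partials everywhere gives $\mu_\mathrm{kde}\in C^1$, with gradient given by the vector integral stated in (i). The main obstacle is this continuity step, and the uniform bound in \textbf{K2} is exactly what removes it: we need no moment or regularity assumption on $\mu$.

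For (ii), $k(\bx,\cdot)>0$ everywhere by \textbf{K3}, and a strictly positive measurable function has strictly positive integral against any probability measure. To see this, note that $\R^d=\bigcup_n\{\by: k(\bx,\by)>1/n\}$, so some set in this union has positive $\mu$-mass, and the integral over it is already positive.

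For (iii), use the formula from (i) together with Jensen's (or Minkowski's) inequality for Bochner integrals:
\[
\|\nabla\mu_\mathrm{kde}(\bx)\|\le\int\|\nabla_\bx k(\bx,\by)\|\,\mathrm{d}\mu(\by)\le M_k.
\]
This bound is uniform in $\bx$, which is the claim.
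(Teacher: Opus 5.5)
Your proof is correct and follows the paper's route. For (i), you differentiate under the integral with the constant $M_k$ as a $\mu$-integrable dominating function; you spell out the mean-value and dominated-convergence steps, including continuity of the partials, which the paper compresses into ``the Leibniz rule.'' You then use \textbf{K3} for positivity in (ii) and the triangle inequality for the integral in (iii), exactly as the paper does.

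Your finiteness caveat is a real point that the paper's proof passes over. The appendix does \emph{not} assume $k$ bounded, and \textbf{K2}--\textbf{K4} alone do not guarantee $\mu_\mathrm{kde}<\infty$. For example, $k(\bx,\by)=1+\sqrt{1+\|\bx\|^2}+\sqrt{1+\|\by\|^2}$ satisfies all three, yet $\mu_\mathrm{kde}\equiv\infty$ whenever $\mu$ has no first moment. So an extra hypothesis is genuinely needed, such as boundedness of $k$, or finiteness of $\mu_\mathrm{kde}$ at one point combined with your Lipschitz remark.
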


In particular, no moment or smoothness conditions on $\mu$ are required: the constant $M_k$ serves as a universal dominating function for any probability measure, enabling all subsequent Leibniz interchanges.

\begin{proposition}[KDE injectivity; proof in Appendix~\ref{app:injectivity}]
\label{prop:kde-injectivity}
Under \textbf{K1}, $\mu_\mathrm{kde} = \nu_\mathrm{kde}$ pointwise implies $\mu = \nu$.
\end{proposition}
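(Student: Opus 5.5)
The argument is essentially a direct unpacking of definitions, and I would present it that way. By Definition~\ref{def:kde}, $\mu_{\mathrm{kde}}(\bx) = \Tk[\mu](\bx) = \int k(\bx,\by)\,\mathrm{d}\mu(\by)$. By Definition~\ref{def:rkhs}, the kernel mean embedding is $m_k^\mu = \int k(\cdot,\by)\,\mathrm{d}\mu(\by)$. Evaluating $m_k^\mu$ at a point $\bx$ gives $\mu_{\mathrm{kde}}(\bx)$, so the KDE and the mean embedding are the same function of $\bx$. If $k$ is not symmetric, I would note that \textbf{K1} is stated with the embedding written as $\int k(\cdot,\by)\,\mathrm{d}\mu(\by)$, which is exactly the KDE with $\bx$ in the first slot, so no symmetry is needed.

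The key steps, in order, are as follows.
\begin{enumerate}[nosep,leftmargin=*]
\item Check that both integrals are well defined for every $\bx$. Pointwise equality then means $m_k^\mu(\bx) = m_k^\nu(\bx)$ for all $\bx\in\R^d$.
\item Conclude that $m_k^\mu = m_k^\nu$ as functions.
\item Apply \textbf{K1}, the injectivity of $\mu \mapsto m_k^\mu$ on $\mathcal{P}(\R^d)$, to obtain $\mu = \nu$.
\end{enumerate}

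The only real point of care is step 2 when the embedding is viewed as an element of $\Hk$ rather than as a function. Elements of an RKHS are genuine functions, and two elements of $\Hk$ that agree pointwise are equal in $\Hk$. This holds because, by the reproducing property, $f(\bx) = \inner{f}{k(\bx,\cdot)}_{\Hk}$. If $f = m_k^\mu - m_k^\nu$ vanishes pointwise, then $f$ is orthogonal to the span of $\{k(\bx,\cdot)\}_{\bx}$. That span is dense in $\Hk$, so $f = 0$.

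Well-definedness in step 1 is where I expect the only obstacle. The integral must be finite, and the embedding must actually lie in $\Hk$. The fastest route is to assume $k$ is bounded, which holds for the Gaussian kernel since $k\le 1$. In that case $\int \sqrt{k(\by,\by)}\,\mathrm{d}\mu(\by) < \infty$, and the Bochner integral defining $m_k^\mu$ exists in $\Hk$. The evaluation functional $\inner{\cdot}{k(\bx,\cdot)}_{\Hk}$ is continuous, so it commutes with this Bochner integral, and this is what justifies identifying $m_k^\mu(\bx)$ with $\mu_{\mathrm{kde}}(\bx)$. If boundedness were not available, I would instead read \textbf{K1} purely as injectivity of the map to functions. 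The proposition then follows from steps 1--3 without any Hilbert-space argument.
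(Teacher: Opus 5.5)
Your proposal is correct and follows the paper's proof: you identify $\mu_{\mathrm{kde}}$ with the pointwise evaluation of the mean embedding via the reproducing property, use orthogonality to the dense span of $\{k(\cdot,\bx)\}$ to get $m_k^\mu = m_k^\nu$ in $\Hk$, and then conclude $\mu=\nu$ by \textbf{K1}. Your additional check that the embedding is a well-defined Bochner integral (for example, when $k$ is bounded) is a point of care the paper omits, but it does not change the argument.
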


\begin{remark}[Foundation summary] 
\label{rem:foundation-summary}
Under \ref{K1}--\ref{K4}, the KDE-smoothed densities $p_\mathrm{kde}$ and $q_\mathrm{kde}$ are strictly positive and $C^1$. In particular, the log-ratio $\log(p_\mathrm{kde}/q_\mathrm{kde})$ is well-defined and $C^1$, and $p_\mathrm{kde} = q_\mathrm{kde}$ if and only if $p = q$. This means every divergence-minimization argument at the KDE level faithfully transfers to the original distributions.
\end{remark}

\subsection{Gradient Flows of $f$-Divergences under KDE Approximation}
\label{sec:f-div-flows}

With smoothed densities that are smooth and positive (Theorem~\ref{thm:kde-regularity}), we can apply the standard Wasserstein gradient flow machinery to $f$-divergences directly at the KDE level.

Recall that for a convex function $f:(0,\infty)\to\R$ with $f(1)=0$, the $f$-divergence is $D_f(\rho\|\pi) = \int \pi\, f(\rho/\pi)\mathrm{d}\bx$ (Definition~\ref{def:f-div} in Appendix). The WGF of $\mathcal{F}[q] = D_f(q\|p)$ has first variation $\frac{\delta\mathcal{F}}{\delta q}(\bx) = f'(q(\bx)/p(\bx))$ and particle velocity $\mathbf{v}_f(\bx) = -\nabla f'(q(\bx)/p(\bx))$ (Proposition~\ref{prop:wgf} in Appendix). Replacing the true densities with their KDE approximations yields the \emph{generalized drifting velocity field}:
\begin{equation}
\label{eq:generalized-velocity}
\mathbf{v}_f(\bx) = -\nabla f'\!\left(\frac{q(\bx)}{p(\bx)}\right).
\end{equation}

\begin{theorem}[Energy dissipation]
\label{thm:energy-dissipation}
Let $f$ be strictly convex and let $\{q_t\}_{t \geq 0}$ be smooth positive densities evolving according to the continuity equation with velocity~\eqref{eq:generalized-velocity}. Under appropriate boundary conditions (Appendix~\ref{app:f-div}, Remark~\ref{rem:boundary}):
\begin{equation}
\label{eq:energy-dissipation}
\frac{d}{dt}D_f(q_{t}\|p) = -\int q_{t}(\bx)\left\|\nabla f'\!\left(\frac{q_{t}(\bx)}{p(\bx)}\right)\right\|^2 \mathrm{d}\bx \leq 0.
\end{equation}
On compact Riemannian manifolds without boundary (e.g., $\Sph^{d-1}$), the boundary condition is vacuous and~\eqref{eq:energy-dissipation} holds unconditionally.
\end{theorem}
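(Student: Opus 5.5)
We differentiate $D_f(q_t\|p)=\int p\, f(q_t/p)\,\mathrm{d}\bx$ in time, substitute the continuity equation, and integrate by parts once. Throughout, $p$ and $q_t$ denote the KDE-level densities. By Theorem~\ref{thm:kde-regularity} they are $C^1$ and strictly positive, so the ratio $r_t:=q_t/p$ is well defined and $C^1$, and $f'(r_t)$ can be differentiated. The target is an identity of the form $\frac{d}{dt}\mathcal{F}[q_t]=-\int q_t\|\mathbf{v}_f\|^2$, which is the standard dissipation identity for a Wasserstein gradient flow.

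\textbf{Step 1 (time derivative).} Since $p$ does not depend on $t$, we have $\partial_t\bigl(p\,f(q_t/p)\bigr)=f'(r_t)\,\partial_t q_t$. Interchanging $\frac{d}{dt}$ and $\int$ then gives
\[
\frac{d}{dt}D_f(q_t\|p)=\int f'(r_t(\bx))\,\partial_t q_t(\bx)\,\mathrm{d}\bx .
\]
This interchange is justified by dominated convergence, using the smoothness of $q_t$ in $t$ together with the boundary/integrability hypotheses of Remark~\ref{rem:boundary}.

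\textbf{Step 2 (continuity equation).} With velocity $\mathbf{v}_f=-\nabla f'(r_t)$, the continuity equation reads $\partial_t q_t=-\nabla\cdot(q_t\mathbf{v}_f)=\nabla\cdot\bigl(q_t\nabla f'(r_t)\bigr)$, which matches~\eqref{eq:continuity} with $\frac{\delta\mathcal{F}}{\delta q}=f'(q/p)$. Substituting this into Step 1 gives $\int f'(r_t)\,\nabla\cdot\bigl(q_t\nabla f'(r_t)\bigr)\,\mathrm{d}\bx$.

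\textbf{Step 3 (integration by parts).} Apply the divergence theorem on balls $B_R$:
\[
\int_{B_R} f'(r_t)\,\nabla\cdot\bigl(q_t\nabla f'(r_t)\bigr)\,\mathrm{d}\bx=\oint_{\partial B_R} f'(r_t)\,q_t\,\nabla f'(r_t)\cdot \mathbf{n}\,\mathrm{d}S-\int_{B_R} q_t\|\nabla f'(r_t)\|^2\,\mathrm{d}\bx .
\]
The appropriate boundary condition is exactly that the surface term vanishes as $R\to\infty$. The last integral is nonnegative, and by monotone convergence it tends to $\int q_t\|\nabla f'(r_t)\|^2$. Hence~\eqref{eq:energy-dissipation} holds, and the inequality $\le 0$ follows because $q_t\ge0$.

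Strict convexity of $f$ does not enter the sign argument. It is used only to note that the dissipation vanishes iff $\nabla f'(r_t)=0$ on $\{q_t>0\}$. Since $f'$ is injective, this forces $r_t$ to be constant there, which will be useful for the later identifiability results. On a compact manifold without boundary such as $\Sph^{d-1}$, Step 3 uses the Riemannian divergence theorem, $\int_M \nabla\cdot(\mathbf{X})\,\mathrm{d}\mathrm{vol}=0$. There is no surface term and all integrals are finite by continuity on a compact set, so the identity holds unconditionally.

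The main obstacle is Step 3 on $\R^d$, namely the decay of the flux $f'(r_t)\,q_t\nabla f'(r_t)$ at infinity. The ratio $r_t$ can blow up or degenerate in the tails. For example, with $f(r)=r\log r$ we get $f'(r)=\log r+1$, which grows like a quadratic in $|\bx|$ for Gaussian KDE tails. For this reason I would not try to prove the decay but would state it as the hypothesis of Remark~\ref{rem:boundary}, namely
\[
\lim_{R\to\infty}\oint_{\partial B_R}|f'(r_t)|\,q_t\,|\nabla f'(r_t)|\,\mathrm{d}S=0,
\]
together with integrability of $f'(r_t)\,\partial_tq_t$. As a sanity check, I would verify that for Gaussian KDEs of compactly supported measures the flux is $q_t$ times a polynomial, and so decays.
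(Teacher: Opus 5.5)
Your proof is correct and follows essentially the same route as the paper's (Proposition~\ref{prop:wgf}(iii)): write $\frac{d}{dt}D_f=\int f'(q_t/p)\,\partial_t q_t$, substitute the continuity equation, apply the product rule and the divergence theorem on $B_R$, and kill the surface term with the boundary hypothesis~\eqref{eq:boundary-vanish}; on compact manifolds the surface term is absent because $\partial M=\varnothing$. Your extra care fills in details the paper leaves implicit: you justify interchanging $\frac{d}{dt}$ and $\int$, pass to $R\to\infty$ by monotone convergence, and assume integrability of $f'(r_t)\,\partial_t q_t$ explicitly; note that your absolute-value flux condition is slightly stronger than the paper's signed one.
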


Table~\ref{tab:unification} records the specific velocity fields for the divergences of primary interest.

\begin{table}[t]
\centering
\caption{Generative models as Wasserstein gradient flows of divergences under KDE approximation. All velocity fields are sample-computable via the KDE score formula (Appendix~\ref{app:score}).}
\label{tab:unification}
\begin{tabular}{@{}lllll@{}}
\toprule
$f(u)$ & Divergence & $f'(u)$ & KDE velocity field $\mathbf{v}^{\mathrm{kde}}(\bx)$ & Model \\
\midrule
$u\log u$ & Forward KL & $\log u + 1$ & $\nabla \log p_\mathrm{kde} - \nabla \log q_\mathrm{kde}$ & Drifting \\
$-\log u$ & Reverse KL & $-1/u$ & $\frac{p_\mathrm{kde}}{q_\mathrm{kde}}(\nabla \log p_\mathrm{kde} - \nabla \log q_\mathrm{kde})$ & -- \\[2pt]
$\frac{1}{2}(u-1)^2$ & $\chi^2$ & $(u-1)$ & $\frac{q_\mathrm{kde}}{p_\mathrm{kde}}(\nabla \log p_\mathrm{kde} - \nabla \log q_\mathrm{kde})$ & -- \\
\midrule
\multicolumn{2}{l}{$\frac{1}{2}\|m_k^p - m_k^q\|_{\Hk}^2$} & \multicolumn{2}{l}{$\nabla_\bx \int k(\bx,\by)\mathrm{d}(p-q)(\by) = \nabla(p_\mathrm{kde} - q_\mathrm{kde})$} & MMD \\
\bottomrule
\end{tabular}
\end{table}

\begin{remark}[Factored velocity structure]
\label{rem:factored}
All $f$-divergence velocities in Table~\ref{tab:unification} share the common factor $(\nabla\log p_\mathrm{kde} - \nabla\log q_\mathrm{kde})$, modulated by a density-ratio weight $w(\bx)$: $w \equiv 1$ (forward KL), $w = p_\mathrm{kde}/q_\mathrm{kde}$ (reverse KL), $w = q_\mathrm{kde}/p_\mathrm{kde}$ ($\chi^2$). This weight governs the local emphasis: forward KL treats all regions equally, reverse KL up-weights regions of high data density (encouraging precision), and $\chi^2$ up-weights regions of high generated density (penalizing spurious mass).
\end{remark}

\subsection{Unified Identifiability}
\label{sec:ident-conv}

Combining the distribution-matching foundation \ref{sec:kde-matching} with the gradient flow machinery \ref{sec:f-div-flows}, we obtain a unified identifiability result.

\begin{theorem}[Unified identifiability (Proof in Appendix \ref{app:proof-ident})]
\label{thm:identifiability}
Let $k$ satisfy \ref{K1}--\ref{K4} and $f$ be strictly convex with $f(1)=0$. If the generalized drifting velocity~\eqref{eq:generalized-velocity} vanishes identically, $\mathbf{v}_f^{\mathrm{kde}} \equiv \mathbf{0}$, then $p = q$.
\end{theorem}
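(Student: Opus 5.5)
The plan is to pass from a vanishing velocity to a constant density ratio, force that constant to be $1$ by normalization, and then invoke Proposition~\ref{prop:kde-injectivity}.

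\textbf{Step 1: the ratio is constant.} Write $r(\bx) := q_\mathrm{kde}(\bx)/p_\mathrm{kde}(\bx)$. By Theorem~\ref{thm:kde-regularity}(i)--(ii), both KDEs are $C^1$ and strictly positive, so $r$ is a well-defined, positive, $C^1$ function on the connected set $\R^d$. The hypothesis says $\nabla f'(r(\bx)) \equiv \mathbf{0}$. Since $\R^d$ is connected, $f'\circ r$ is constant, say $f'(r(\bx)) = c$. Strict convexity of $f$ makes $f'$ strictly increasing on $(0,\infty)$, hence injective. Therefore $r(\bx)$ equals a single constant $\lambda>0$ for every $\bx$.

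A subtlety: for general strictly convex $f$ we only know $f'$ exists a.e. (or as a monotone function). I would assume $f\in C^1$, as is implicit in writing~\eqref{eq:generalized-velocity} as a gradient. Alternatively, I would argue directly with the factored form of Remark~\ref{rem:factored}, namely $\mathbf{v} = w\,(\nabla\log p_\mathrm{kde} - \nabla\log q_\mathrm{kde})$ with $w>0$. Vanishing then gives $\nabla\log r \equiv 0$ directly.

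\textbf{Step 2: the constant is $1$.} We now have $q_\mathrm{kde} = \lambda\, p_\mathrm{kde}$. The main obstacle is showing $\lambda = 1$, since KDEs with a general kernel need not integrate to $1$, so normalization is not automatic. For translation-invariant kernels $k(\bx,\by)=\kappa(\bx-\by)$ with $\int\kappa<\infty$ (e.g.\ the Gaussian), Fubini gives
\[
\int q_\mathrm{kde} = \int\kappa = \int p_\mathrm{kde},
\]
hence $\lambda=1$.

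For a general kernel satisfying only \ref{K1}--\ref{K4}, I would instead use linearity of the mean embedding. The identity $q_\mathrm{kde} = \lambda\, p_\mathrm{kde}$ says that the signed measure $q-\lambda p$ has zero embedding. If $\lambda\neq 1$, rescale: for $\lambda<1$, the measure $(q-\lambda p)/(1-\lambda)$ is a signed measure of total mass $1$ with zero embedding. Ruling this out requires characteristicness to extend to signed measures, i.e.\ that $k$ be integrally strictly positive definite or $c_0$-universal, which is the usual strengthening. Under that reading, $q - \lambda p = 0$, and taking total masses gives $1 = \lambda$.

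I expect to state this extension explicitly, note that it holds for the Gaussian kernel, or use the Fubini argument in the translation-invariant case.

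\textbf{Step 3: conclude.} With $\lambda=1$ we have $q_\mathrm{kde} = p_\mathrm{kde}$ pointwise, and Proposition~\ref{prop:kde-injectivity} (from \ref{K1}) yields $p=q$.
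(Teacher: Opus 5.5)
Your Steps 1 and 3 match the paper's proof: connectedness plus injectivity of $f'$ gives $q_\mathrm{kde}=\lambda\,p_\mathrm{kde}$, and Proposition~\ref{prop:kde-injectivity} finishes. Your Fubini computation is also exactly how the paper gets $\lambda=1$: via Remark~\ref{rem:normalization} for translation-invariant kernels, plus a symmetry remark on compact manifolds.

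What you do differently is notice that this normalization does not follow from \textbf{K1}--\textbf{K4}. The paper passes over this point. You then supply a second route, through injectivity of the mean embedding on finite signed measures. The two routes buy different things:
\begin{itemize}
\item The paper's route is elementary, but it needs an integrable translation-invariant profile (or a zonal kernel on the sphere).
\item Your route needs no integrability. For translation-invariant kernels it even costs nothing beyond \textbf{K1}: by Bochner, a characteristic kernel has spectral measure $\Lambda$ of full support, so $\|m_\sigma\|^2=\int|\hat\sigma|^2\,\mathrm{d}\Lambda$ vanishes only for $\sigma=0$. It therefore also covers, e.g., $k_h+1$, where $\int p_\mathrm{kde}=\infty$ and the paper's normalization is meaningless.
\end{itemize}

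Your caution is warranted. Under \textbf{K1}--\textbf{K4} alone the statement is false, so some strengthening of this kind is unavoidable. Take a Gaussian $k_0$ with feature map $\Phi_0(\bx)=k_0(\cdot,\bx)$, and points $a\neq b$ with $k_0(a,b)<1/4$. Set $m_0:=2\Phi_0(a)-\Phi_0(b)$ and
\[
k(\bx,\by):=\langle\Phi_0(\bx)-m_0,\Phi_0(\by)-m_0\rangle_{\mathcal{H}_{k_0}}.
\]
This kernel satisfies all four conditions:
\begin{itemize}
\item It is positive semidefinite and smooth with bounded gradient.
\item It satisfies $k\geq 1-4k_0(a,b)>0$.
\item It is characteristic. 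For probability measures $\mu,\nu$, the $k$-embedding of $\mu-\nu$ is $\int(\Phi_0-m_0)\,\mathrm{d}(\mu-\nu)=\int\Phi_0\,\mathrm{d}(\mu-\nu)$, so injectivity is inherited from the Gaussian.
\end{itemize}
Yet $\Phi_0(b)-m_0=2\bigl(\Phi_0(a)-m_0\bigr)$. Hence $p=\delta_b$ and $q=\delta_a$ give $q_\mathrm{kde}\equiv\tfrac12\,p_\mathrm{kde}$, so $\mathbf{v}_f^{\mathrm{kde}}\equiv\mathbf{0}$ even though $p\neq q$.
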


\begin{corollary}[Loss landscape]
\label{cor:landscape}
The KDE-level $f$-divergence $D_f(q_\mathrm{kde}\|p_\mathrm{kde})$
satisfies:
\begin{enumerate}[nosep]
    \item $D_f \geq 0$, with equality if and only if $q = p$
    \emph{(identifiability)};
    \item $\frac{d}{dt}D_f \leq 0$ along the Wasserstein gradient
    flow \emph{(energy dissipation)};
    \item the only equilibrium of the flow is $q = p$.
\end{enumerate}
\end{corollary}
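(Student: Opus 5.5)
Each of the three items of Corollary~\ref{cor:landscape} reduces to a result already in hand, applied at the KDE level where $p_\mathrm{kde},q_\mathrm{kde}$ are strictly positive and $C^1$ (Theorem~\ref{thm:kde-regularity}).

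\emph{Item 1 (nonnegativity and identifiability).} Since $f$ is convex with $f(1)=0$, Jensen's inequality gives
\[
D_f(q_\mathrm{kde}\|p_\mathrm{kde})=\int p_\mathrm{kde} f\!\left(\frac{q_\mathrm{kde}}{p_\mathrm{kde}}\right)\mathrm{d}\bx\;\ge\; Z_p\, f\!\left(\frac{Z_q}{Z_p}\right),
\]
where $Z_\mu=\int\mu_\mathrm{kde}\,\mathrm{d}\bx$. For the Gaussian kernel, $Z_p=Z_q=(2\pi h^2)^{d/2}$, because each KDE is a shifted Gaussian integrated against a probability measure. The right-hand side is therefore $Z_p f(1)=0$. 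The same normalization argument works for any translation-invariant kernel; I would state the corollary under that assumption, or with kernels normalized so the KDEs are densities.

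For the equality case, strict convexity makes Jensen strict unless $q_\mathrm{kde}/p_\mathrm{kde}$ is constant $p_\mathrm{kde}$-a.e. Because $p_\mathrm{kde}>0$, this means a.e.\ on $\R^d$. Equal masses force the constant to be $1$. Continuity then upgrades a.e.\ equality to pointwise equality $q_\mathrm{kde}=p_\mathrm{kde}$, and Proposition~\ref{prop:kde-injectivity} gives $q=p$. The converse direction is trivial.

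\emph{Item 2 (energy dissipation).} Apply Theorem~\ref{thm:energy-dissipation} with $(q_t,p)$ replaced by $(q_{t,\mathrm{kde}},p_\mathrm{kde})$. The smoothness and positivity it requires come from Theorem~\ref{thm:kde-regularity}. Here I will simply inherit the boundary conditions of Remark~\ref{rem:boundary}, as the theorem itself does.

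\emph{Item 3 (unique equilibrium).} An equilibrium is a state with $\partial_t q_t=0$. I would take this in the stronger sense that the velocity $\mathbf{v}_f^\mathrm{kde}$ vanishes on the support of $q_\mathrm{kde}$. Since $q_\mathrm{kde}>0$ everywhere, that support is all of $\R^d$, so $\mathbf{v}_f^\mathrm{kde}\equiv\mathbf{0}$. Theorem~\ref{thm:identifiability} then yields $p=q$. Conversely, $p=q$ gives $q_\mathrm{kde}/p_\mathrm{kde}\equiv 1$, so the velocity vanishes.

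The main obstacle is item~3 if equilibrium is read only as $\partial_t q=0$, i.e.\ a stationary solution of the continuity equation with nonzero divergence-free flux. In that case I would use the dissipation identity \eqref{eq:energy-dissipation}. Stationarity makes $\frac{d}{dt}D_f=0$, so $\int q_\mathrm{kde}\|\mathbf{v}_f\|^2=0$. Positivity of $q_\mathrm{kde}$ and continuity of $\mathbf{v}_f$ then give $\mathbf{v}_f\equiv 0$, which returns us to the strong case. A secondary subtlety is the mass-normalization issue in item~1, handled as above.
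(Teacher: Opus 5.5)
Your proposal is correct and follows essentially the paper's route. The paper gives no separate proof; it assembles the corollary from KDE positivity, regularity and injectivity (Proposition~\ref{prop:kde-injectivity}), the equal-mass normalization of Remark~\ref{rem:normalization}, Theorem~\ref{thm:energy-dissipation} and Theorem~\ref{thm:identifiability}. Your Jensen argument for item~1 and your dissipation-identity argument for a merely stationary state fill in details the paper leaves implicit. You are also right to require equal KDE masses (e.g.\ translation-invariant or zonal kernels) in item~1, since under \textbf{K1}--\textbf{K4} alone $Z_p$ and $Z_q$ can differ and nonnegativity of $D_f(q_\mathrm{kde}\|p_\mathrm{kde})$ can fail.
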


The unique global optimum of the KDE-level divergence is $p=q$, and the energy is monotonically non-increasing along the flow.

\subsection{The Drifting Model as Forward KL Gradient Flow}
\label{sec:drifting-case}

We now show that the Drifting Model of~\cite{deng2026generative} is a special case of our framework, corresponding to the forward KL divergence $f(u) = u \log u$.

\begin{theorem}[Core equivalence; proof in Appendix~\ref{app:equivalence}]
\label{thm:core-equivalence}
Let $k_h(\bx,\by) = \exp(-\|\bx-\by\|^2/(2h^2))$ with $h > 0$, and let $p, q \in \mathcal{P}(\R^d)$. Then the drifting field~\eqref{eq:drifting-field-original} satisfies
\begin{equation}
\label{eq:core-eq}
\mathbf{V}_{p,q}(\bx) = h^2\bigl(\nabla \log p_\mathrm{kde}(\bx) - \nabla \log q_\mathrm{kde}(\bx)\bigr) \quad \text{for all } \bx \in \R^d.
\end{equation}
\end{theorem}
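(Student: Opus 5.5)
The plan is to prove the identity separately for each of the two terms $\mathbf{V}_p^+$ and $\mathbf{V}_q^-$. The mechanism is the elementary Gaussian-kernel identity
\begin{equation*}
\nabla_\bx k_h(\bx,\by) = \frac{\by-\bx}{h^2}\,k_h(\bx,\by),
\end{equation*}
which holds for every $\bx,\by\in\R^d$. It gives
\begin{equation*}
\E_\mu\bigl[k_h(\bx,\by)(\by-\bx)\bigr] = h^2\,\E_\mu\bigl[\nabla_\bx k_h(\bx,\by)\bigr]
\end{equation*}
for any $\mu\in\mathcal{P}(\R^d)$. This is the numerator of the mean-shift term.

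The second step is to move the gradient outside the expectation. Here I invoke Theorem~\ref{thm:kde-regularity}(i). The Gaussian kernel satisfies \textbf{K2}--\textbf{K4}: it is smooth and strictly positive, and its gradient is bounded by
\begin{equation*}
\sup_{r\ge 0} \frac{r}{h^2}\,e^{-r^2/(2h^2)} = \frac{e^{-1/2}}{h}.
\end{equation*}
So for arbitrary $\mu$, with no moment assumptions, $\mu_\mathrm{kde}$ is $C^1$ and $\nabla \mu_\mathrm{kde}(\bx) = \E_\mu[\nabla_\bx k_h(\bx,\by)]$. The same bound also shows that the numerator $\E_\mu[k_h(\bx,\by)(\by-\bx)]$ is finite, since $\|\by-\bx\|k_h(\bx,\by)$ is bounded. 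Hence the field in \eqref{eq:drifting-field-original} is well defined for all $p,q\in\mathcal{P}(\R^d)$.

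The third step divides by the denominator. By Theorem~\ref{thm:kde-regularity}(ii), $\E_\mu[k_h(\bx,\by)] = \mu_\mathrm{kde}(\bx) > 0$. Therefore
\begin{equation*}
\frac{\E_\mu[k_h(\bx,\by)(\by-\bx)]}{\E_\mu[k_h(\bx,\by)]} = h^2\,\frac{\nabla\mu_\mathrm{kde}(\bx)}{\mu_\mathrm{kde}(\bx)} = h^2\,\nabla\log\mu_\mathrm{kde}(\bx),
\end{equation*}
where the logarithm is $C^1$ because $\mu_\mathrm{kde}$ is strictly positive. Applying this with $\mu=p$ gives $\mathbf{V}_p^+ = h^2\nabla\log p_\mathrm{kde}$, and with $\mu=q$ gives $\mathbf{V}_q^- = h^2\nabla\log q_\mathrm{kde}$. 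Subtracting the two yields \eqref{eq:core-eq} for all $\bx$.

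The only genuine obstacle is justifying the interchange of $\nabla$ and $\E_\mu$ for arbitrary probability measures. Theorem~\ref{thm:kde-regularity} already handles this via the uniform bound $M_k$. The remaining work is to check that the Gaussian kernel meets \textbf{K2}--\textbf{K4}, which is the short computation of its gradient bound given above.
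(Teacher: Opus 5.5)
Your proposal is correct and matches the paper's proof. Both rest on the Gaussian identity $\nabla_\bx k_h(\bx,\by)=\frac{\by-\bx}{h^2}k_h(\bx,\by)$, inserted into the KDE score formula $\nabla\log\mu_\mathrm{kde}=\nabla\mu_\mathrm{kde}/\mu_\mathrm{kde}$ (with the Leibniz interchange and positivity supplied by Theorem~\ref{thm:kde-regularity}), applied to $p$ and $q$ and then subtracted. The only cosmetic difference is that the paper writes each score as $\frac{1}{h^2}\bigl(\E_{\mu,k_h}[\by\mid\bx]-\bx\bigr)$ and cancels the $\bx$ terms on subtraction, whereas you identify each mean-shift term $\mathbf{V}_p^+$, $\mathbf{V}_q^-$ directly with $h^2\nabla\log\mu_\mathrm{kde}$.
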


The proof is a direct computation: the Gaussian kernel satisfies $\nabla_\bx k_h(\bx,\by) = \frac{\by - \bx}{h^2}k_h(\bx,\by)$, and substituting into the KDE score formula (Appendix~\ref{app:score}) gives exactly the mean-shift vectors $\mathbf{V}_p^+$ and $\mathbf{V}_q^-$ from~\eqref{eq:drifting-field-original}.

\begin{corollary}[Drifting = Forward KL Wasserstein gradient flow + KDE]
\label{cor:drifting-wgf}
The right-hand side of~\eqref{eq:core-eq} is precisely $h^2 \mathbf{v}_{\KL}^{\mathrm{kde}}(\bx)$, the forward KL row of Table~\ref{tab:unification} scaled by $h^2$. Hence the Drifting Model's velocity fields correspond to the Wasserstein-2 gradient flow of $\KL(q_\mathrm{kde}\|p_\mathrm{kde})$, up to a time rescaling by $h^2$.
\end{corollary}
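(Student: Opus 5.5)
The plan is to combine Theorem~\ref{thm:core-equivalence} with the forward KL row of Table~\ref{tab:unification}, and then check that the factor $h^2$ only rescales time in the continuity equation.

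\textbf{Step 1 (identifying the velocity).} Take $f(u)=u\log u$, so $f'(u)=\log u+1$. Substituting the KDE densities into the generalized velocity~\eqref{eq:generalized-velocity} gives
\[
\mathbf{v}_{\KL}^{\mathrm{kde}}(\bx) = -\nabla\bigl(\log q_\mathrm{kde}(\bx)-\log p_\mathrm{kde}(\bx)+1\bigr) = \nabla\log p_\mathrm{kde}(\bx)-\nabla\log q_\mathrm{kde}(\bx).
\]
This step is legitimate because Theorem~\ref{thm:kde-regularity}(i)--(ii) makes both KDEs strictly positive and $C^1$ for arbitrary $p,q\in\mathcal{P}(\R^d)$. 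Hence the logarithms and their gradients exist everywhere, and the additive constant $1$ drops out under $\nabla$. By Theorem~\ref{thm:core-equivalence}, $\mathbf{V}_{p,q}=h^2\mathbf{v}_{\KL}^{\mathrm{kde}}$ pointwise, which is the first claim.

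\textbf{Step 2 (interpreting it as a gradient flow of $\KL(q_\mathrm{kde}\|p_\mathrm{kde})$).} By Definition~\ref{def:wgf}, the first variation of $q\mapsto \KL(q\|p)$ is $\log(q/p)+1$. Evaluating it with KDE-smoothed densities yields exactly the velocity in Step~1. This is the sense of Table~\ref{tab:unification}, and I will state it in that sense.

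\textbf{Step 3 (time rescaling).} Suppose particles move by $\frac{d\bx_t}{dt}=\mathbf{V}_{p,q_t}(\bx_t)=h^2\mathbf{v}_{\KL}^{\mathrm{kde}}(\bx_t)$. Set $s=h^2t$ and $\tilde\bx_s=\bx_{s/h^2}$. Then $\frac{d\tilde\bx_s}{ds}=\mathbf{v}_{\KL}^{\mathrm{kde}}(\tilde\bx_s)$. At the density level, $\tilde q_s:=q_{s/h^2}$ satisfies the continuity equation~\eqref{eq:continuity} with velocity $\mathbf{v}_{\KL}^{\mathrm{kde}}$. This holds because $\partial_s\tilde q_s=h^{-2}\partial_t q_t$ and the velocity carries the compensating factor $h^2$. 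Consequently, the trajectories of the Drifting dynamics and of the KDE forward-KL flow coincide as sets, and they are traversed at speeds differing by the constant factor $h^2$.

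\textbf{Main obstacle.} The only subtlety is interpretational. The velocity field is the gradient of the first variation evaluated with the KDE densities. It is not literally the first variation of the functional $q\mapsto\KL(\Tk[q]\|\Tk[p])$: the latter would involve an extra convolution with the kernel. So I will phrase the corollary in the sense used throughout Section~\ref{sec:f-div-flows}, namely a gradient flow with densities replaced by their KDEs, rather than claim an exact variational identity. Beyond this, everything is immediate from Theorem~\ref{thm:core-equivalence} and Theorem~\ref{thm:kde-regularity}.
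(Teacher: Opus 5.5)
Your proposal is correct and follows the paper's route: the paper also obtains the corollary by reading off the forward-KL velocity $\nabla\log p_\mathrm{kde}-\nabla\log q_\mathrm{kde}$ from the table with $\rho=q_\mathrm{kde}$, $\pi=p_\mathrm{kde}$, justified by KDE positivity and $C^1$ regularity, and combining it with Theorem~\ref{thm:core-equivalence}. Your explicit time-rescaling computation and your caveat make precise two points the paper leaves implicit; the caveat is that this is the flow with densities replaced by KDEs, not the literal gradient flow of $q\mapsto\KL(\Tk[q]\|\Tk[p])$, whose first variation carries an extra kernel convolution.
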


This identification immediately imports the convergence and identifiability results of \ref{sec:ident-conv} to the Drifting Model. The identifiability proof, in particular, reduces to:
$\mathbf{V}_{p,q} \equiv \mathbf{0}
\;\xRightarrow{\text{Thm.~\ref{thm:core-equivalence}}}\;
\nabla\log p_\mathrm{kde} = \nabla\log q_\mathrm{kde}
\;\xRightarrow{\text{Thm.~\ref{thm:identifiability}}}\;
p = q.$

\subsection{MMD Generators as $\mathcal{L}^2$ Gradient Flows}
\label{sec:mmd-case}

The squared MMD functional $\mathcal{F}[q] = \frac{1}{2}\MMD_k^2(q,p) = \frac{1}{2}\|m_k^q - m_k^p\|_{\Hk}^2$ is not an $f$-divergence, but fits naturally into our framework.

\begin{proposition}[MMD gradient flow velocity; proof in Appendix~\ref{app:mmd}]
\label{prop:mmd-velocity}
Under \ref{K1}--\ref{K4}, the WGF velocity of $\frac{1}{2}\MMD_k^2(q,p)$ is
\begin{equation}
\mathbf{v}_\MMD(\bx) = \nabla\bigl(p_\mathrm{kde}(\bx) - q_\mathrm{kde}(\bx)\bigr) = \int \nabla_\bx k(\bx,\by)\mathrm{d}(p-q)(\by).
\end{equation}
\end{proposition}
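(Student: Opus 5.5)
Following Definition~\ref{def:wgf}, I will compute the first variation of $\mathcal{F}[q]=\frac12\MMD_k^2(q,p)$, take its spatial gradient, and negate it. Two remarks first. Under the Wasserstein convention here the velocity is $-\nabla\frac{\delta\mathcal F}{\delta q}$, so I expect it to be $-\nabla(q_\mathrm{kde}-p_\mathrm{kde}) = \nabla(p_\mathrm{kde}-q_\mathrm{kde})$, which matches the statement. Also, since $\mathcal{F}$ is built from bounded kernel integrals, no density of $q$ or $p$ is needed, so the computation can be done at the level of measures.

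\textbf{Step 1 (expand the functional).} Using the reproducing property and bilinearity of $\inner{\cdot}{\cdot}_{\Hk}$, I will write
\begin{equation*}
\mathcal{F}[q]=\tfrac12\iint k\,\mathrm{d}q\,\mathrm{d}q-\iint k\,\mathrm{d}q\,\mathrm{d}p+\tfrac12\iint k\,\mathrm{d}p\,\mathrm{d}p .
\end{equation*}
I need $\inner{m_k^\mu}{m_k^\nu}_{\Hk}=\iint k(\bx,\by)\,\mathrm{d}\mu(\bx)\,\mathrm{d}\nu(\by)$. This holds because $m_k^\mu$ is a Bochner integral of $k(\cdot,\by)$, which requires $\int\sqrt{k(\by,\by)}\,\mathrm{d}\mu<\infty$. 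Boundedness of $k$ on the diagonal suffices (it holds for the Gaussian kernel), and I would note it as implicit in the standing kernel assumptions (full statement in Appendix~\ref{app:assumptions}).

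\textbf{Step 2 (first variation).} Take an admissible perturbation $q_\varepsilon=q+\varepsilon\chi$ with $\int \mathrm{d}\chi=0$. Symmetry of $k$ gives
\begin{equation*}
\frac{d}{d\varepsilon}\Big|_{\varepsilon=0}\mathcal{F}[q_\varepsilon]=\int\Big(\int k(\bx,\by)\,\mathrm{d}(q-p)(\by)\Big)\mathrm{d}\chi(\bx).
\end{equation*}
Hence $\frac{\delta\mathcal F}{\delta q}(\bx)=q_\mathrm{kde}(\bx)-p_\mathrm{kde}(\bx)$, up to an additive constant that does not affect the gradient.

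\textbf{Step 3 (gradient).} By Theorem~\ref{thm:kde-regularity}(i), applied separately to $p$ and $q$ under \textbf{K2} and \textbf{K4}, both KDEs are $C^1$, and the gradient can be passed inside the integral. This yields
\begin{equation*}
\mathbf{v}_\MMD=-\nabla\frac{\delta\mathcal F}{\delta q}=\int\nabla_\bx k(\bx,\by)\,\mathrm{d}(p-q)(\by),
\end{equation*}
where the dominating constant $M_k$ justifies the Leibniz interchange for any probability measure. \textbf{K1} and \textbf{K3} are not needed for the formula itself.

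\textbf{Main obstacle.} The substantive point is Step 1: justifying the inner-product expansion. This needs the kernel to be bounded, or at least $\int\sqrt{k(\by,\by)}\,\mathrm{d}\mu<\infty$, so that the mean embeddings lie in $\Hk$. Compared with this, the variational calculus in Step 2 is formal.
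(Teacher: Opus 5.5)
Your proposal is correct and is essentially the paper's argument: the paper perturbs $q\to(1-\epsilon)q+\epsilon\delta_\bx$ (the particular zero-mass direction $\chi=\delta_\bx-q$), reads off $\frac{\delta\mathcal F}{\delta q}=q_{\mathrm{kde}}-p_{\mathrm{kde}}$ (up to an additive constant) from the reproducing property, and then uses Theorem~\ref{thm:kde-regularity}(i) for the gradient--integral interchange, as in your Steps 2--3. Your caveat that $m_k^\mu\in\Hk$ needs something like $\int\sqrt{k(\by,\by)}\,\mathrm{d}\mu(\by)<\infty$ is a point the paper's proof also relies on without saying so, so it is a fair refinement; one small correction is that the positive-definiteness half of \textbf{K1} is still needed for $\Hk$ to exist, and only its characteristic half is dispensable.
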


Note that $\mathbf{v}_\MMD$ is the gradient of the $\mathcal{L}^2$ density difference, while the $f$-divergence velocities in Table~\ref{tab:unification} involve the gradient of a nonlinear function of the density ratio. Both families are sample-computable via the KDE score formula, and the same identifiability argument applies (Remark in \ref{sec:ident-conv}).

\subsection{Mixed Gradient Flows}
\label{sec:mixed}

Different divergences induce complementary failure modes. We propose mixing gradient flows to combine their strengths.

\begin{theorem}[Legitimacy of mixed gradient flows; proof in Appendix~\ref{app:superposition}]
\label{thm:mixed-flow}
Let $D_1, D_2$ be divergences ($D_i(q\|p) \geq 0$, with equality iff $q=p$), and $\alpha, \beta > 0$ with $\alpha + \beta = 1$. Define $D_\mathrm{mix} = \alpha D_1 + \beta D_2$. Then:
\begin{enumerate}[nosep]
    \item[(a)] $D_\mathrm{mix}$ is a valid divergence;
    \item[(b)] its WGF velocity is $\mathbf{v}_\mathrm{mix} = \alpha\,\mathbf{v}_1 + \beta\,\mathbf{v}_2$;
    \item[(c)] $\frac{d}{dt}D_\mathrm{mix}[q_t] \leq 0$ along the flow.
\end{enumerate}
\end{theorem}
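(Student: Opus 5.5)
Each of the three claims is handled directly, treating $D_1,D_2$ as functionals on densities that have first variations $\frac{\delta D_i}{\delta q}$. By Definition~\ref{def:wgf}, these give WGF velocities $\mathbf{v}_i = -\nabla \frac{\delta D_i}{\delta q}$.

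For (a), I would check the two divergence axioms pointwise in $(q,p)$. Nonnegativity holds because $\alpha,\beta>0$ and $D_1,D_2\ge 0$, so $\alpha D_1+\beta D_2\ge 0$. If $q=p$, both terms vanish and $D_\mathrm{mix}=0$. Conversely, if $D_\mathrm{mix}(q\|p)=0$, then $\alpha D_1 + \beta D_2 = 0$ is a sum of two nonnegative terms with strictly positive weights, so $D_1(q\|p)=D_2(q\|p)=0$. Either one then gives $q=p$ by the hypothesis on $D_i$. This is where $\alpha,\beta>0$ is used; the constraint $\alpha+\beta=1$ only normalizes and is not needed.

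For (b), I would use linearity of the first variation. For an admissible perturbation $\chi$ with $\int \chi = 0$,
\begin{equation*}
\frac{d}{d\varepsilon}D_\mathrm{mix}[q+\varepsilon\chi]\Big|_{\varepsilon=0} = \alpha\int \frac{\delta D_1}{\delta q}\chi + \beta \int \frac{\delta D_2}{\delta q}\chi .
\end{equation*}
This shows $\frac{\delta D_\mathrm{mix}}{\delta q} = \alpha \frac{\delta D_1}{\delta q} + \beta\frac{\delta D_2}{\delta q}$, up to an additive constant that the gradient kills. Taking $-\nabla$ and using linearity of the gradient gives $\mathbf{v}_\mathrm{mix}=\alpha\mathbf{v}_1+\beta\mathbf{v}_2$. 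At the KDE level, the first variations involve $p_\mathrm{kde}$ and $q_\mathrm{kde}$, which are $C^1$ and strictly positive by Theorem~\ref{thm:kde-regularity}. The gradients therefore exist and no further regularity is needed.

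For (c), I would reuse the chain-rule computation behind Theorem~\ref{thm:energy-dissipation}. Along the continuity equation $\partial_t q_t = -\nabla\cdot(q_t \mathbf{v}_\mathrm{mix})$,
\begin{equation*}
\frac{d}{dt}D_\mathrm{mix}[q_t] = \int \frac{\delta D_\mathrm{mix}}{\delta q}\,\partial_t q_t \,\mathrm{d}\bx = -\int q_t \Bigl\|\nabla\frac{\delta D_\mathrm{mix}}{\delta q}\Bigr\|^2\mathrm{d}\bx \le 0 .
\end{equation*}
The second equality is an integration by parts. The boundary term vanishes under the same conditions as in Remark~\ref{rem:boundary}, or trivially on a compact manifold without boundary.

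I expect the main obstacle to be justifying this integration by parts and the exchange of $d/dt$ with the integral for a general pair $D_1,D_2$. The plan is to assume, as in Theorem~\ref{thm:energy-dissipation}, that each $D_i$ individually satisfies those conditions; the conditions are linear, so they transfer to the combination. Note that the dissipation must be computed for the mixed flow itself. One cannot simply add the individual dissipation inequalities, since each of those holds along its own flow rather than along the mixed one.
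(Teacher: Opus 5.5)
Your proof is correct and matches the paper. Its Appendix~\ref{app:superposition} proves only (b), by the same linearity of the first variation and of $\nabla$, and leaves (a) and (c) implicit; you fill them in the natural way, using positive weights for (a) and applying the dissipation identity directly to $\Phi_\mathrm{mix}:=\alpha\Phi_1+\beta\Phi_2$, with $\Phi_i:=\frac{\delta D_i}{\delta q}$, along the mixed flow for (c). One small correction: the boundary flux $q_t\,\Phi_\mathrm{mix}\nabla\Phi_\mathrm{mix}\cdot\hat{\mathbf{n}}$ is quadratic rather than linear in $\Phi$, so it contains a cross term $\alpha\beta\,q_t\nabla(\Phi_1\Phi_2)\cdot\hat{\mathbf{n}}$ that the individual conditions do not control; you should impose the vanishing-flux condition on $\Phi_\mathrm{mix}$ itself, or rely on absolute tail-decay bounds like those sketched in Remark~\ref{rem:boundary}, which handle all terms at once.
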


\paragraph{Practical mixed drifting field.} We propose combining the reverse KL and $\chi^2$ velocity fields:
\begin{equation}
\label{eq:mixed-field}
\mathbf{V}_\mathrm{mix}(\bx) = \alpha \cdot \frac{p_\mathrm{kde}}{q_\mathrm{kde}}(\nabla \log p_\mathrm{kde} - \nabla \log q_\mathrm{kde}) + \beta \cdot \frac{q_\mathrm{kde}}{p_\mathrm{kde}}(\nabla \log p_\mathrm{kde} - \nabla \log q_\mathrm{kde}),
\end{equation}
corresponding to $D_\mathrm{mix} = \alpha\,\KL_\mathrm{kde}(p\|q) + \beta\,\chi^2_\mathrm{kde}(q\|p)$. Referring to Remark~\ref{rem:factored}, the reverse KL weight $p_\mathrm{kde}/q_\mathrm{kde}$ provides strong attraction toward high-density regions of $p$ (precision-forcing, avoiding mode blurring), while the $\chi^2$ weight $q_\mathrm{kde}/p_\mathrm{kde}$ penalizes spurious generated mass (coverage-forcing, avoiding mode collapse). Their combination reconciles mode-seeking and mode-covering behaviors. Experiments in \ref{sec:experiments} confirm this qualitative picture.

\subsection{Extension to Riemannian Manifolds}
\label{sec:riemannian}

The Drifting Model~\cite{deng2026generative} trains in a semantic feature space that is empirically close to a hypersphere. This motivates extending our framework to Riemannian manifolds $M$. Two benefits emerge:
\begin{enumerate}[nosep,leftmargin=*]
    \item \textbf{Vacuous boundary conditions.} On compact manifolds without boundary (e.g., $\Sph^{d-1}$), the energy dissipation inequality~\eqref{eq:energy-dissipation} holds unconditionally (Theorem~\ref{thm:energy-dissipation}), eliminating the tail-decay assumptions required on $\R^d$.
    \item \textbf{Richer kernel design.} The adapted spherical assumptions \textbf{K1}$_\Sph$--\textbf{K4}$_\Sph$ (Appendix~\ref{app:kernels-sphere}) admit kernels with qualitatively different weighting profiles. For example, the von Mises--Fisher (vMF) kernel $k_\kappa(\bx,\by) = \exp(\kappa\,\bx^\top\by)$ provides a spherical analog of the Gaussian kernel, while the spherical logarithmic kernel (Appendix~\ref{app:kernels-sphere}, Proposition~\ref{prop:spherical-log}) produces polynomial (inverse-distance) weighting, analogous to the Euclidean IMQ kernel, offering heavier tails and better global mode coverage.
\end{enumerate}

All results of \ref{sec:f-div-flows}--\ref{sec:mixed}—velocity fields, energy dissipation, identifiability, and mixed flows—extend to the Riemannian setting by replacing Euclidean gradients with Riemannian gradients and requiring the manifold analogues of \ref{K1}--\ref{K4}. Details and kernel verifications are given in Appendix~\ref{app:kernels-sphere}.

\subsection{Training Pipeline}
\label{sec:pipeline}

Algorithm~\ref{alg:drift} summarizes the full training procedure of Gradient Flow Drifting. The framework is modular: one selects a divergence (or mixture), a kernel satisfying \ref{K1}--\ref{K4}, and trains a one-step generator via the stop-gradient loss.

\begin{algorithm}[t]
\caption{Gradient Flow Drifting: training algorithm}
\label{alg:drift}
\begin{algorithmic}[1]
\REQUIRE Generator $f_\theta$, data distribution $p$, source distribution $p_{\beps}$

\STATE \textbf{Divergence selection:} Choose divergence(s) achieving distributional convergence (e.g., reverse KL, forward KL, $\chi^2$, or mixture thereof).

\STATE \textbf{Velocity field:} Derive the WGF velocity from the chosen divergence.

\STATE \textbf{Kernel design:} Select a kernel $k$ satisfying Assumption~\ref{ass:kernel-main} \ref{K1}--\ref{K4}, or their Riemannian analogues.

\FOR{each training iteration}

\STATE Sample $\beps \sim p_{\beps}$; compute $\bx = f_\theta(\beps)$ (generated samples).

\STATE Sample $\by^+ \sim p$ (data samples).

\STATE \textbf{Mini-batch KDE velocity estimation:} Compute $\mathbf{v}^{\mathrm{kde}}(\bx)$ over $\{\by^+\}$ and $\{\bx\}$.

\STATE \textbf{Update:} $\mathcal{L}(\theta) = \E_{\beps}\bigl[\|f_\theta(\beps) - \mathrm{sg}(f_\theta(\beps) + \mathbf{v}^{\mathrm{kde}}(f_\theta(\beps)))\|^2\bigr]$; \; $\theta \leftarrow \theta - \eta\nabla_\theta\mathcal{L}$.

\ENDFOR
\end{algorithmic}
\end{algorithm}

\section{Experiments}
\label{sec:experiments}

\subsection{Synthetic 2D Benchmarks}
We visualized the particle evolution under the velocity field of gradient flow using different implementations of divergence and kernel functions. 

As shown in Fig.\ref{fig:swiss_images}, the original drifting model and $\mathcal{L}^2$ flow drifting (with the same gradient flow with MMD) show a mode-covering training process as harsh punishment from divergence. We can easily find that they both have blur situations.

The reverse KL divergence + $\chi^2$ divergence
mixture flow drifting shows a totally different evolving process. This model almost only generate precise samples, but not struggles in mode collapse, it quickly explored all the modes.

The original drifting model uses laplace kernel which may have some issues in high probability area. Since the Laplace kernel violates the assumption\ref{K4}, the gradient flow derived from it is mathematically only "weakly" defined, and during the convergence stage, it causes numerical instability (jittering) of particles near the data manifold. We can observe this phenomenon on the center of the swiss-roll distribution, the generation distribution has weird distortion, while RBF kernel version not. While the drifting model achieved empirical success due to the uniformity of semantic distribution, we can make it much more stable through the design of kernel function.

\begin{figure}[t]
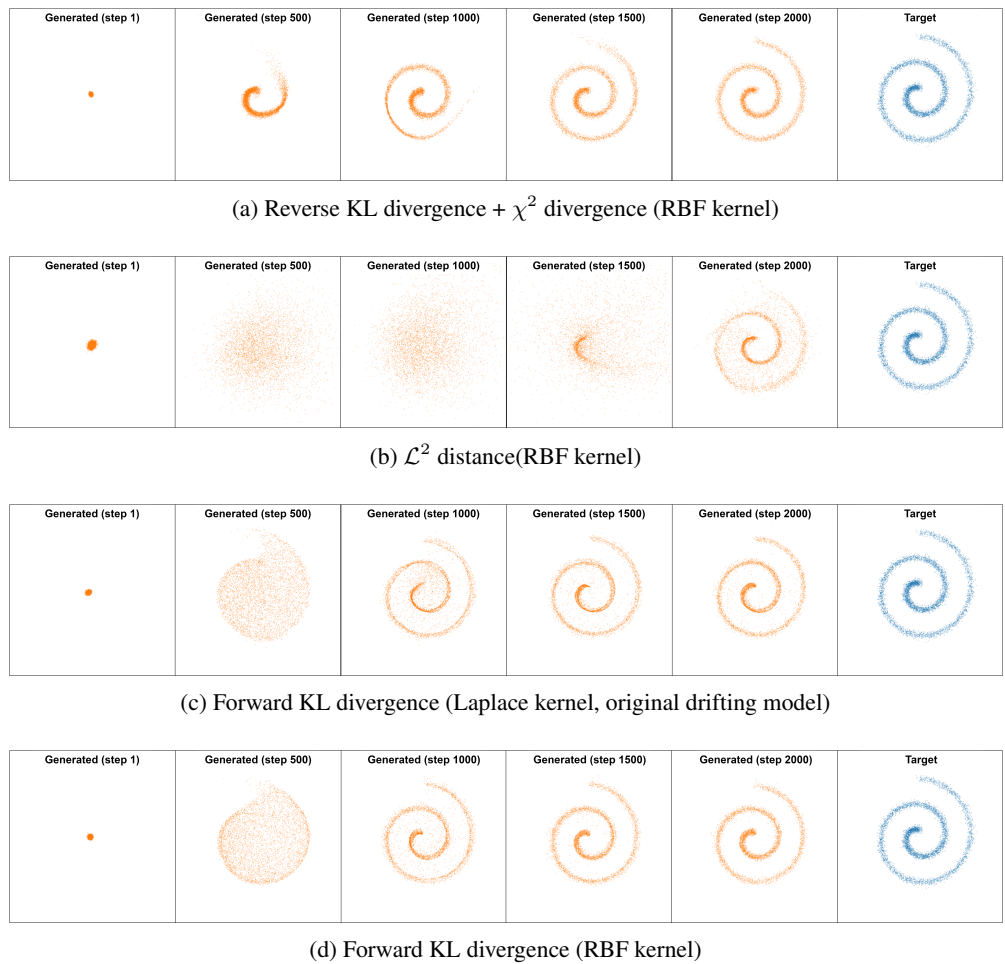

\centering

\begin{subfigure}{0.8\textwidth}
    \centering
    \includegraphics[width=\textwidth]{swiss/1.png}
    \caption{Reverse KL divergence + $\chi^2$ divergence (RBF kernel)}
\end{subfigure}

\vspace{0.4cm}

\begin{subfigure}{0.8\textwidth}
    \centering
    \includegraphics[width=\textwidth]{swiss/2.png}
    \caption{$\mathcal{L}^2$ distance(RBF kernel)}
\end{subfigure}

\vspace{0.4cm}

\begin{subfigure}{0.8\textwidth}
    \centering
    \includegraphics[width=\textwidth]{swiss/3.png}
    \caption{Forward KL divergence (Laplace kernel, original drifting model)}
\end{subfigure}

\vspace{0.4cm}

\begin{subfigure}{0.8\textwidth}
    \centering
    \includegraphics[width=\textwidth]{swiss/4.png}
    \caption{Forward KL divergence (RBF kernel)}
\end{subfigure}

\caption{Training results with the velocity field of gradient flow under different implementations of divergence and kernel function on 2D-toy dataset.}
\label{fig:swiss_images}
\end{figure}

\section{Conclusion and Discussion}
\label{sec:conclusion}

We have found a new family of generative models, and given a mathematical equivalence between the Drifting Model and the Wasserstein gradient flow of the KL divergence under KDE approximation as a special case of our method, \textbf{Gradient Flow Drifting}. We have proved that under the aid of a finely designed kernel function, matching the pushforward distribution of KDE can achieve an approximation of the original distribution, and then extended this method onto Riemannian manifold which loosens the constraints on the kernel function, and make this method more suitable for the semantic space which is used in Drifting Model~\cite{deng2026generative}. Besides, we did some preliminary experiments on synthetic benchmarks to validate the framework.

\paragraph{Limitations.}
Our approach utilizes the convergence of the KDE distribution to induce the convergence of the original distribution. However, in practice, we can only approximate this KDE distribution using minibatches. As the dimension increases, the variance of the minibatch estimation will gradually increase, which will seriously affect the stability of the training and the final convergence effect, as other kernel-based methods suffer.

\paragraph{Future work.}
Future work includes extending Gradient Flow Drifting to large-scale, high-dimensional datasets and diverse generation tasks, such as conditional generation and multi-modal generation. We plan to conduct comprehensive ablation experiments to evaluate the contribution of the combined reverse KL and $\chi^2$ gradient flows, explore the impact of different kernel functions and bandwidth choices and investigate acceleration techniques such as mini-batch particle updates and kernel approximation to improve computational efficiency and practical application capabilities. Meanwhile, we will follow the engineering techniques used in drifting model~\cite{deng2026generative}, like training the model in semantic space and using multiple bandwidths. Furthermore, as theoretical analysis indicates, the Riemannian manifold is highly suitable for our approach. We will employ the hyperspherical semantic space constructed by JEPA, use ViT-based instead of a CNN-based architecture to achieve high computation efficiency, and make this type of model more scalable.


\bibliographystyle{plainnat}
\bibliography{references}

\newpage
\appendix
\section*{Appendix}

\section{Definitions and Standing Assumptions}
\label{app:assumptions}

\paragraph{Probability measures.}
Let $\mathcal{P}(\R^d)$ denote the set of Borel probability measures on
$\R^d$. For $\mu\in\mathcal{P}(\R^d)$ and a measurable function $g$,
we write $\E_\mu[g]=\int_{\R^d}g(\by)\mathrm{d}\mu(\by)$ whenever the
integral is well-defined.

\begin{assumption}[Kernel Regularity \textbf{K1}--\textbf{K4}]
\label{ass:kernel}
Let $k:\R^d\times\R^d\to\R$ be a kernel satisfying:
\begin{enumerate}[nosep, label=\textbf{K\arabic*.}]
  \item\label{K1} \textbf{Characteristic.}\;
    The kernel $k$ is positive definite, and the mean embedding
    $\mu\mapsto\int k(\cdot,\by)\mathrm{d}\mu(\by)$ is injective on
    $\mathcal{P}(\R^d)$.
  \item\label{K2} \textbf{Uniform gradient bound.}\;
    $\displaystyle M_k:=\sup_{\bx,\by\in\R^d}
    \bigl\|\nabla_\bx k(\bx,\by)\bigr\|<\infty$.
  \item\label{K3} \textbf{Strict positivity.}\;
    $k(\bx,\by)>0$ for all $\bx,\by\in\R^d$.
  \item\label{K4} \textbf{Differentiability.}\;
    For every $\by\in\R^d$, the map
    $\bx\mapsto k(\bx,\by)$ is continuously differentiable.
\end{enumerate}
\end{assumption}

\begin{remark}[Normalized KDE]
\label{rem:normalization}
For a translation-invariant kernel $k(\bx,\by)=\varphi(\bx-\by)$
with $\varphi\in L^1(\R^d)$, $\int\mu_\mathrm{kde}(\bx)\mathrm{d}\bx
=\|\varphi\|_{L^1}=:Z_k$ for every $\mu\in\mathcal{P}(\R^d)$.
The normalized density $\bar\mu_h:=\mu_\mathrm{kde}/Z_k\in\mathcal{P}(\R^d)$
satisfies $\nabla\log\bar\mu_h=\nabla\log\mu_\mathrm{kde}$, so all
score-based formulas are unaffected by the normalization constant.
\end{remark}

\section{Injectivity of the KDE Operator}
\label{app:injectivity}

We show that a characteristic kernel allows the recovery of the
original measure from its KDE.

\begin{proposition}[KDE Injectivity]
\label{prop:injectivity}
Let $k$ satisfy~\ref{K1}. If\/ $\mu_\mathrm{kde}(\bx)=\nu_\mathrm{kde}(\bx)$
for all $\bx\in\R^d$, then $\mu=\nu$.
\end{proposition}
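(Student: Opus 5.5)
The plan is to reduce the statement directly to the defining property in \ref{K1}. By Definition~\ref{def:kde}, the KDE of $\mu$ is the function $\bx\mapsto\int k(\bx,\by)\,\mathrm{d}\mu(\by)$. By Definition~\ref{def:rkhs}, the kernel mean embedding is $m_k^\mu=\int k(\cdot,\by)\,\mathrm{d}\mu(\by)$. So the two objects are the same function of $\bx$, namely $\mu_\mathrm{kde}=m_k^\mu$ pointwise. The only issue is that $m_k^\mu$ is regarded as an element of $\Hk$, while the hypothesis only gives pointwise equality of functions.

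The key steps, in order, are as follows.

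\emph{First}, I check that $m_k^\mu$ is a well-defined element of $\Hk$. It is the Bochner integral of $\by\mapsto k(\cdot,\by)$, and this requires $\int\|k(\cdot,\by)\|_{\Hk}\,\mathrm{d}\mu(\by)=\int\sqrt{k(\by,\by)}\,\mathrm{d}\mu(\by)<\infty$. For bounded kernels such as the Gaussian, $k(\by,\by)=1$, so this holds. In general, well-definedness is implicitly part of \ref{K1}, since the embedding is assumed to be a map on all of $\mathcal{P}(\R^d)$.

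\emph{Second}, I identify the embedding with the KDE pointwise. For any $\bx$, the reproducing property together with the fact that the bounded linear functional $\langle\cdot,k(\bx,\cdot)\rangle_{\Hk}$ commutes with Bochner integration gives
\[
m_k^\mu(\bx)=\langle m_k^\mu,k(\bx,\cdot)\rangle_{\Hk}=\int k(\bx,\by)\,\mathrm{d}\mu(\by)=\mu_\mathrm{kde}(\bx).
\]

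\emph{Third}, I pass from pointwise equality to equality in $\Hk$. Elements of an RKHS are genuine functions, and two elements agreeing at every point are equal as elements of $\Hk$: their difference $g$ satisfies $g(\bx)=\langle g,k(\bx,\cdot)\rangle_{\Hk}=0$ for all $\bx$, and the span of $\{k(\bx,\cdot)\}$ is dense in $\Hk$, so $g=0$. Hence $\mu_\mathrm{kde}=\nu_\mathrm{kde}$ everywhere gives $m_k^\mu=m_k^\nu$ in $\Hk$.

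\emph{Finally}, injectivity in \ref{K1} yields $\mu=\nu$.

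The main obstacle is essentially bookkeeping rather than analysis. One must make sure the "mean embedding" in \ref{K1} is read in the same sense as the KDE, that is, as the function $\int k(\cdot,\by)\,\mathrm{d}\mu(\by)$. If \ref{K1} is read literally as injectivity of this function-valued map, the second and third steps are unnecessary and the proof is a single line. I would present that one-line argument and add the RKHS remark covering the Hilbert-space interpretation, noting where boundedness of $k$ is used.
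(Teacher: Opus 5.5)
Your proposal is correct and follows the paper's proof essentially step for step. Both arguments identify $m_k^\mu$ with $\mu_\mathrm{kde}$ via the reproducing property, upgrade pointwise equality to equality in $\Hk$ using density of $\mathrm{span}\{k(\cdot,\bx)\}$, and conclude by the injectivity in \textbf{K1}. Your extra check that the mean embedding is a well-defined (Bochner) element of $\Hk$, e.g.\ for bounded kernels, is a detail the paper leaves implicit.
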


\begin{proof}
Let $\mathcal{H}_k$ be the RKHS of $k$ and denote the mean embeddings
$m_\mu:=\int k(\cdot,\by)\mathrm{d}\mu(\by)$,
$m_\nu:=\int k(\cdot,\by)\mathrm{d}\nu(\by)\in\mathcal{H}_k$.
By the reproducing property,
$m_\mu(\bx)=\langle m_\mu,k(\cdot,\bx)\rangle_{\mathcal{H}_k}
=\mu_\mathrm{kde}(\bx)$
and similarly for $\nu$. Hence $\mu_\mathrm{kde}=\nu_\mathrm{kde}$ pointwise
implies $\langle m_\mu-m_\nu,k(\cdot,\bx)\rangle_{\mathcal{H}_k}=0$ for all
$\bx$. Since $\{k(\cdot,\bx):\bx\in\R^d\}$ spans a dense subset
of $\mathcal{H}_k$, we obtain $m_\mu=m_\nu$ in $\mathcal{H}_k$.
By~\ref{K1} (injectivity of the mean embedding), $\mu=\nu$.
\end{proof}

\section{Regularity of KDE-Smoothed Densities}
\label{app:regularity}

We establish that the smoothness of $\mu_\mathrm{kde}$ is inherited
entirely from the kernel, regardless of the regularity of $\mu$.

\begin{theorem}[KDE Regularity]
\label{thm:regularity}
Let $k$ satisfy~\ref{K2}--\ref{K4} and let $\mu\in\mathcal{P}(\R^d)$
be arbitrary. Then:
\begin{enumerate}[nosep, label=(\roman*)]
  \item\label{item:C1}
    $\mu_\mathrm{kde}\in C^1(\R^d)$ and differentiation commutes with
    integration:
    \begin{equation}\label{eq:leibniz}
      \nabla_\bx\,\mu_\mathrm{kde}(\bx)
      =\int_{\R^d}\nabla_\bx k(\bx,\by)\mathrm{d}\mu(\by).
    \end{equation}
  \item\label{item:pos}
    $\mu_\mathrm{kde}(\bx)>0$ for all $\bx\in\R^d$.
  \item\label{item:bdd-grad}
    $\sup_\bx\|\nabla\mu_\mathrm{kde}(\bx)\|\le M_k<\infty$.
\end{enumerate}
\end{theorem}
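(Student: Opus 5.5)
The proof is pointwise differentiation under the integral sign. The bound \ref{K2} gives a dominating function that does not depend on $\mu$, and positivity follows directly from \ref{K3}.

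\emph{Finiteness.} First I check that $\mu_\mathrm{kde}(\bx)$ is finite everywhere. The mean embedding in \ref{K1}/Definition~\ref{def:rkhs} is implicitly taken to exist, so I will use boundedness of $k$ (true for the Gaussian kernel). If only \ref{K2} is available, the mean value theorem still gives
\begin{equation*}
|k(\bx,\by)-k(\bx_0,\by)|\le M_k\|\bx-\bx_0\|,
\end{equation*}
so finiteness of $\mu_\mathrm{kde}$ at a single point $\bx_0$ propagates to all $\bx$. I expect this integrability to be the only real subtlety: \ref{K2}--\ref{K4} alone do not force $\int k(\bx,\by)\,\mathrm{d}\mu(\by)<\infty$. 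I would state explicitly that finiteness at one point, or boundedness of $k$, is assumed, as it is implicitly throughout the paper.

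\emph{Part (i).} Fix $\bx$, a direction $e_i$, and $0<|t|\le 1$, and consider the difference quotient
\begin{equation*}
\frac{k(\bx+te_i,\by)-k(\bx,\by)}{t}.
\end{equation*}
By \ref{K4} it converges to $\partial_{x_i}k(\bx,\by)$ for every $\by$. By the mean value theorem and \ref{K2}, it is bounded in absolute value by $M_k$, which is $\mu$-integrable because $\mu$ is a probability measure. Dominated convergence then gives the partial derivatives in the form \eqref{eq:leibniz}.

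Continuity of $\bx\mapsto\int\nabla_\bx k(\bx,\by)\,\mathrm{d}\mu(\by)$ follows the same way. For $\bx_n\to\bx$, we have $\nabla_\bx k(\bx_n,\by)\to\nabla_\bx k(\bx,\by)$ for each $\by$ by \ref{K4}, and the integrands are dominated by $M_k$. Since the partial derivatives exist and are continuous, $\mu_\mathrm{kde}\in C^1$.

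\emph{Part (ii).} The integrand is strictly positive everywhere by \ref{K3}, and $\mu(\R^d)=1>0$. The integral of a strictly positive measurable function against a nonzero measure is strictly positive, since $\{k(\bx,\cdot)>1/n\}$ increases to $\R^d$ and hence has positive $\mu$-measure for some $n$.

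\emph{Part (iii).} Using \eqref{eq:leibniz} and Jensen's inequality (or the triangle inequality for Bochner integrals),
\begin{equation*}
\|\nabla\mu_\mathrm{kde}(\bx)\|\le\int\|\nabla_\bx k(\bx,\by)\|\,\mathrm{d}\mu(\by)\le M_k .
\end{equation*}
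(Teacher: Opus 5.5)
Your proposal is correct and follows the paper's own argument: the constant $M_k$ from \textbf{K2} dominates the difference quotients and the gradients, which gives the Leibniz rule and continuity of the derivative; \textbf{K3} gives positivity; and the triangle inequality gives the bound. Your finiteness caveat is a legitimate point that the paper's proof silently skips. For instance, $k(\bx,\by)=1+\norm{\by}^2$ satisfies \textbf{K2}--\textbf{K4}, yet $\mu_\mathrm{kde}\equiv\infty$ for any $\mu$ without a second moment, so the extra hypothesis you add (boundedness of $k$, or finiteness of $\mu_\mathrm{kde}$ at one point, which propagates by your Lipschitz argument) is appropriate.
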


\begin{proof}
\ref{item:C1}:\;
By~\ref{K4}, $\bx\mapsto k(\bx,\by)$ is $C^1$ for each $\by$.
By~\ref{K2}, $\|\nabla_\bx k(\bx,\by)\|\le M_k$ for all
$\bx,\by$. Since the constant $M_k$ is trivially
$\mu$-integrable ($\int M_k\mathrm{d}\mu=M_k<\infty$ for any
probability measure $\mu$), the Leibniz integral rule yields
\eqref{eq:leibniz} and continuity of the derivative.

\ref{item:pos}:\;
By~\ref{K3}, $k(\bx,\by)>0$ for all $\bx,\by$.
Since $\mu$ is a nonzero positive measure,
$\mu_\mathrm{kde}(\bx)=\int k(\bx,\by)\mathrm{d}\mu(\by)>0$.

\ref{item:bdd-grad}:\;
$\|\nabla\mu_\mathrm{kde}(\bx)\|
 =\bigl\|\int\nabla_\bx k(\bx,\by)\mathrm{d}\mu(\by)\bigr\|
 \le\int\|\nabla_\bx k(\bx,\by)\|\mathrm{d}\mu(\by)
 \le M_k$.
\end{proof}

\begin{corollary}
\label{cor:wgf-applies}
For any $p,q\in\mathcal{P}(\R^d)$, the KDE densities $p_\mathrm{kde},q_\mathrm{kde}$ are
strictly positive and $C^1$. In particular, the log-ratio
$\log(p_\mathrm{kde}/q_\mathrm{kde})$ is well-defined and $C^1$, and the
$f$-divergence machinery of the next section applies to
$p_\mathrm{kde},q_\mathrm{kde}$ with no additional assumptions on $p,q$.
\end{corollary}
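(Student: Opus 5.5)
This is a direct consequence of Theorem~\ref{thm:regularity} applied separately to $\mu=p$ and $\mu=q$, followed by elementary calculus facts about the logarithm. No new estimates are needed; the plan is to assemble the pieces in order.

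First, invoke Theorem~\ref{thm:regularity}\ref{item:C1} and \ref{item:pos} with $\mu=p$ and with $\mu=q$. This gives $p_\mathrm{kde},q_\mathrm{kde}\in C^1(\R^d)$ and $p_\mathrm{kde}(\bx)>0$, $q_\mathrm{kde}(\bx)>0$ for every $\bx$, with no hypotheses on $p,q$ beyond being probability measures. Since both are strictly positive everywhere, the quotient $p_\mathrm{kde}/q_\mathrm{kde}$ is a well-defined, strictly positive $C^1$ function, by the quotient rule for $C^1$ functions with nonvanishing denominator. Composing with $\log$, which is $C^\infty$ on $(0,\infty)$, shows that $\log(p_\mathrm{kde}/q_\mathrm{kde}) = \log p_\mathrm{kde}-\log q_\mathrm{kde}$ is $C^1$. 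Its gradient is
\[
\nabla\log p_\mathrm{kde}-\nabla\log q_\mathrm{kde}
=\frac{\nabla p_\mathrm{kde}}{p_\mathrm{kde}}-\frac{\nabla q_\mathrm{kde}}{q_\mathrm{kde}},
\]
and both numerators are given explicitly by the Leibniz formula \eqref{eq:leibniz}.

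For the final clause, "the $f$-divergence machinery applies", I interpret it as follows. Every quantity appearing in Table~\ref{tab:unification} is pointwise well-defined and continuous. These are $f'(q_\mathrm{kde}/p_\mathrm{kde})$ for $C^1$ $f$ on $(0,\infty)$, the weights $p_\mathrm{kde}/q_\mathrm{kde}$ and $q_\mathrm{kde}/p_\mathrm{kde}$, and the velocity fields. The reason is that the ratio $u=q_\mathrm{kde}/p_\mathrm{kde}$ always lands in the open domain $(0,\infty)$ of $f$. Consequently the chain rule $\nabla f'(u)=f''(u)\nabla u$ holds wherever $f\in C^2$, so the velocity \eqref{eq:generalized-velocity} is a continuous vector field.

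The only point requiring care is this last interpretive step. Global integrability issues, such as finiteness of $D_f(q_\mathrm{kde}\|p_\mathrm{kde})$ or boundary terms in the energy identity, are not delivered by this argument. I would state explicitly that the corollary asserts only pointwise well-definedness and $C^1$ regularity, and defer those integrability issues to the boundary conditions of Remark~\ref{rem:boundary}.
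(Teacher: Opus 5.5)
Your proposal is correct and matches the paper's implicit argument: the corollary is stated without a separate proof, as an immediate consequence of Theorem~\ref{thm:regularity}(i)--(ii) applied to $p$ and to $q$, together with closure of $C^1$ under quotients with nonvanishing denominator and under composition with $\log$. Your caveat on the last clause is also well placed, since Proposition~\ref{prop:wgf} explicitly assumes $D_f(\rho\|\pi)<\infty$, which KDE smoothing does not guarantee (for $\chi^2$ it already fails for Gaussian $p,q$ when $q$ is sufficiently wider than $p$), so that clause should be read as pointwise regularity only.
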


\section{Wasserstein Gradient Flows of $f$-Divergences}
\label{app:f-div}

We recall the Wasserstein gradient flow (WGF)
framework~\cite{ambrosio2005gradient}
for $f$-divergences between smooth positive densities.

\begin{definition}[$f$-divergence]
\label{def:f-div}
Let $f:(0,\infty)\to\R$ be convex with $f(1)=0$.
For positive densities $\rho,\pi$ on $\R^d$,
\begin{equation}\label{eq:f-div}
  D_f(\rho\|\pi)
  :=\int_{\R^d}\pi(\bx)\,
  f\!\left(\frac{\rho(\bx)}{\pi(\bx)}\right)\mathrm{d}\bx.
\end{equation}
\end{definition}

\begin{proposition}[First Variation, Velocity, and Energy Dissipation]
\label{prop:wgf}
Let $\rho,\pi\in C^1(\R^d)$ with $\rho,\pi>0$ everywhere and
$D_f(\rho\|\pi)<\infty$. Then:
\begin{enumerate}[nosep, label=(\roman*)]
  \item The first variation of $D_f(\rho\|\pi)$ with respect to
    $\rho$ is
    \begin{equation}\label{eq:first-var}
      \frac{\delta D_f(\rho\|\pi)}{\delta\rho}(\bx)
      =f'\!\left(\frac{\rho(\bx)}{\pi(\bx)}\right).
    \end{equation}
  \item The WGF particle velocity is
    \begin{equation}\label{eq:wgf-v}
      \mathbf{v}_f(\bx)
      =-\nabla f'\!\left(\frac{\rho(\bx)}{\pi(\bx)}\right).
    \end{equation}
  \item Suppose in addition that the boundary terms arising
    from integration by parts vanish:
    \begin{equation}\label{eq:boundary-vanish}
      \lim_{R\to\infty}\int_{\|\bx\|=R}
      \rho_t\,f'\!\bigl(\tfrac{\rho_t}{\pi}\bigr)\,
      \nabla f'\!\bigl(\tfrac{\rho_t}{\pi}\bigr)
      \cdot\hat{\mathbf{n}}\,dS=0.
    \end{equation}
    Then along any smooth WGF solution $(\rho_t)_{t\ge0}$:
    \begin{equation}\label{eq:dissipation}
      \frac{d}{dt}D_f(\rho_t\|\pi)
      =-\int_{\R^d}\rho_t(\bx)
       \left\|\nabla f'\!\left(
       \frac{\rho_t(\bx)}{\pi(\bx)}\right)\right\|^2\mathrm{d}\bx
      \le 0.
    \end{equation}
\end{enumerate}
\end{proposition}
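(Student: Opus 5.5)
For part (i) I will compute the Gateaux derivative of $D_f(\cdot\|\pi)$ at $\rho$. Fix a perturbation $\chi$ with $\int\chi\,\mathrm{d}\bx=0$ such that $\rho+\varepsilon\chi>0$ for all small $|\varepsilon|$; for instance take $\chi$ compactly supported with $|\chi|\le c\rho$. Then
\[
\frac{d}{d\varepsilon}\Big|_{\varepsilon=0}\int \pi f\!\left(\frac{\rho+\varepsilon\chi}{\pi}\right)\mathrm{d}\bx=\int f'\!\left(\frac{\rho}{\pi}\right)\chi\,\mathrm{d}\bx .
\]
Differentiating under the integral is justified because on the compact support of $\chi$ the ratio $(\rho+\varepsilon\chi)/\pi$ stays in a compact subset of $(0,\infty)$, where $f'$ is bounded; this uses continuity and positivity of $\rho,\pi$. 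The first variation is defined only up to an additive constant, since $\int\chi=0$, and reading it off gives \eqref{eq:first-var}.

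Part (ii) then follows directly from Definition~\ref{def:wgf}: the velocity is $-\nabla$ of the first variation. The only point to note is that $\nabla f'(\rho/\pi)$ needs $f$ to be $C^2$, a standing smoothness assumption on $f$, together with $\rho,\pi\in C^1$. The chain rule gives $\mathbf{v}_f=-f''(\rho/\pi)\nabla(\rho/\pi)$, which is well defined since $\pi>0$.

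For part (iii), let $\phi_t:=f'(\rho_t/\pi)$. I will differentiate in time on the ball $B_R$ first:
\[
\frac{d}{dt}\int_{B_R}\pi f(\rho_t/\pi)\,\mathrm{d}\bx=\int_{B_R}\phi_t\,\partial_t\rho_t\,\mathrm{d}\bx .
\]
This is legitimate for smooth solutions on a bounded domain. Next I substitute the continuity equation $\partial_t\rho_t=\nabla\cdot(\rho_t\nabla\phi_t)$ and apply the divergence theorem:
\[
\int_{B_R}\phi_t\nabla\cdot(\rho_t\nabla\phi_t)=\int_{\partial B_R}\rho_t\phi_t\nabla\phi_t\cdot\hat{\mathbf n}\,dS-\int_{B_R}\rho_t\|\nabla\phi_t\|^2 .
\]
Letting $R\to\infty$, the boundary term vanishes by \eqref{eq:boundary-vanish}. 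The volume term converges monotonically, since its integrand is nonnegative, which gives \eqref{eq:dissipation}. The sign $\le0$ is immediate because $\rho_t\ge0$.

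The main obstacle is exchanging $\lim_{R\to\infty}$ with $d/dt$. The plan is to work in integrated form: $D_f(\rho_t\|\pi)$ restricted to $B_R$ at time $t$ minus its value at time $s$ equals $\int_s^t(\text{boundary}-\text{dissipation})$. I will pass $R\to\infty$ using monotone convergence for the dissipation term and assume, as part of the smoothness hypothesis, that the boundary flux vanishes locally uniformly in time. Then $D_f(\rho_t\|\pi)$ is absolutely continuous with the stated derivative. On compact manifolds without boundary the same computation holds with no boundary term, which is the remark in Theorem~\ref{thm:energy-dissipation}.
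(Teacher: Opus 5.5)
Your proposal is correct and follows the same route as the paper. The first variation comes from a compactly supported mean-zero perturbation, the velocity is read off from Definition~\ref{def:wgf}, and dissipation follows from the continuity equation, the identity $\Phi\,\nabla\cdot(\rho_t\nabla\Phi)=\nabla\cdot(\Phi\rho_t\nabla\Phi)-\rho_t\|\nabla\Phi\|^2$, and the divergence theorem on $B_R$ with $R\to\infty$. Your extra care makes explicit regularity that the paper uses tacitly when it writes $\frac{d}{dt}D_f(\rho_t\|\pi)=\int_{\R^d}\Phi\,\partial_t\rho_t\,\mathrm{d}\bx$ directly over all of $\R^d$: justifying differentiation under the integral, assuming $f\in C^2$, and truncating to $B_R$ before differentiating in time with a flux condition that holds locally uniformly in $t$.
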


\begin{proof}
\emph{(i)}\;
Let $\eta$ be a smooth, compactly-supported perturbation with
$\int\eta\mathrm{d}\bx=0$. Set $u:=\rho/\pi$. Then
\[
  \left.\frac{d}{d\epsilon}\right|_{\epsilon=0}
  D_f(\rho+\epsilon\eta\|\pi)
  =\int\pi\,f'(u)\,\frac{\eta}{\pi}\mathrm{d}\bx
  =\int f'(u)\,\eta\mathrm{d}\bx,
\]
giving~\eqref{eq:first-var}.

\emph{(ii)}\;
Immediate from Definition~\ref{def:wgf}:
$\mathbf{v}_f=-\nabla\frac{\delta\mathcal{F}}{\delta\rho}=-\nabla f'(u)$.

\emph{(iii)}\;
Write $\Phi:=f'(\rho_t/\pi)=\frac{\delta\mathcal{F}}{\delta\rho}$.
Using the continuity equation~\eqref{eq:continuity}:
\[
  \frac{d}{dt}D_f(\rho_t\|\pi)
  =\int_{\R^d}\Phi\,\partial_t\rho_t\mathrm{d}\bx
  =\int_{\R^d}\Phi\,
   \nabla\cdot\!\bigl(\rho_t\nabla\Phi\bigr)\mathrm{d}\bx.
\]
The product rule gives
$\Phi\,\nabla\cdot(\rho_t\nabla\Phi)
=\nabla\cdot(\Phi\,\rho_t\nabla\Phi)
-\rho_t\|\nabla\Phi\|^2$.
Integrating the divergence term over $B_R$ and applying the
divergence theorem yields the boundary integral
\[
  \int_{\|\bx\|=R}\rho_t\,\Phi\,
  (\nabla\Phi\cdot\hat{\mathbf{n}})\,dS,
\]
which vanishes as $R\to\infty$
by~\eqref{eq:boundary-vanish}. Hence
\[
  \frac{d}{dt}D_f(\rho_t\|\pi)
  =-\int_{\R^d}\rho_t\|\nabla\Phi\|^2\mathrm{d}\bx\le 0.
  \qedhere
\]
\end{proof}

\begin{remark}[Boundary condition~\eqref{eq:boundary-vanish}]
\label{rem:boundary}
We distinguish two settings and don't assume the vanishing-boundary condition~\eqref{eq:boundary-vanish}. Within $\R^d$, in our context we assume the original $p,q\in\mathcal{P}_2(\R^d)$ and the induced KDE distributions can be easily satisfy the vanishing-boundary condition.

\begin{enumerate}[leftmargin=2em]
  \item[\textbf{$\R^d$.}]
    Condition~\eqref{eq:boundary-vanish} is an assumption on the
    joint tail behavior of $\rho_t$ and $\pi$. It holds whenever
    $\rho_t$, $\pi$, their ratio, and its gradient have sufficient
    decay at infinity. In the context of this paper, $\rho_t$ and
    $\pi$ are KDE-smoothed densities, whose tails are governed by
    the kernel. For any kernel with exponential or faster decay
    (e.g., Gaussian, Mat\'ern with $\nu>1$, Pseudo-Huber), the
    KDE-smoothed densities inherit exponential decay and the
    condition is satisfied for all $p,q\in\mathcal{P}_2(\R^d)$ with finite
    second moments. For kernels with only polynomial decay
    (e.g., IMQ with exponent $\beta$), the condition requires
    $\beta>(d-2)/2$.

  \item[\textbf{$\Sph^{d-1}$.}]
    If the ambient space is a compact Riemannian manifold $M$
    without boundary,
    then condition~\eqref{eq:boundary-vanish} is
    \emph{vacuous}. Indeed, the divergence theorem on $M$ reads
    \[
      \int_M\nabla_g\cdot X\,d\mathrm{vol}_g
      =\int_{\partial M}g(X,\hat{\mathbf{n}})\,dS=0
    \]
    since $\partial M=\varnothing$. Consequently, the energy
    dissipation inequality~\eqref{eq:dissipation} holds
    unconditionally for any $f$-divergence, any
    kernel satisfying the manifold analogues
    of~\ref{K1}--\ref{K4}, and any $p,q\in\mathcal{P}(M)$.
\end{enumerate}
\end{remark}

\begin{proposition}[Identifiability for $f$-divergence gradient flows]
\label{prop:ident-general}
Let $M$ be a connected Riemannian manifold (e.g., $\R^d$ or
$\Sph^{d-1}$). Let $\rho,\pi\in C^1(M)$ with $\rho,\pi>0$
and $\int_M\rho\,d\mathrm{vol}=\int_M\pi\,d\mathrm{vol}$.
If $f$ is strictly convex and the WGF velocity vanishes
identically, $\mathbf{v}_f\equiv\mathbf{0}$, then $\rho=\pi$.
\end{proposition}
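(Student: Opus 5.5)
Set $u:=\rho/\pi$, which is $C^1$ and strictly positive on $M$ because $\rho,\pi\in C^1(M)$ are positive. The hypothesis $\mathbf{v}_f\equiv\mathbf{0}$ says, by Proposition~\ref{prop:wgf}(ii), that $\nabla f'(u)\equiv\mathbf{0}$ on $M$. The plan has three steps. First, show that $f'(u)$ is constant. Second, use strict convexity to deduce that $u$ itself is constant. Third, use the mass normalization to conclude that this constant is $1$.

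\textbf{Step 1.} For $f'(u)$ to have a well-defined gradient, $f'$ must be differentiable along the range of $u$. I will assume $f\in C^2(0,\infty)$, which covers all the cases in Table~\ref{tab:unification}. Then $f'\circ u$ is $C^1$ and its gradient vanishes identically on the connected manifold $M$. Along any piecewise $C^1$ path joining two points of $M$, the derivative of $f'\circ u$ is zero, so $f'(u)$ takes the same value at both points. Hence $f'(u)\equiv c$ for some constant $c$.

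\textbf{Step 2.} Strict convexity of $f$ makes $f'$ strictly increasing on $(0,\infty)$, and in particular injective. This uses only strict monotonicity; it does not need $f''>0$ everywhere. Injectivity turns $f'(u(\bx))=c$ for all $\bx$ into $u(\bx)=u_0$ for all $\bx$, where $u_0>0$ is a single value. So $\rho=u_0\pi$ on $M$.

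\textbf{Step 3.} Integrate $\rho=u_0\pi$ over $M$ and use $\int_M\rho\,d\mathrm{vol}=\int_M\pi\,d\mathrm{vol}$. If this common mass is finite and nonzero, it gives $u_0=1$, hence $\rho=\pi$.

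\textbf{Main obstacle.} The delicate point is the regularity and integrability bookkeeping in Steps 1 and 3, rather than the algebra. In Step 1, strict convexity alone does not guarantee that $f'$ is differentiable. I would therefore either add $f\in C^2$ as a standing convention (the paper tacitly uses it when writing $\nabla f'(\cdot)$), or argue without it: interpret $\mathbf{v}_f\equiv\mathbf{0}$ as saying that $f'\circ u$ is locally constant, which only requires $f'$ continuous. In Step 3, the common mass must be finite and nonzero. For KDE densities with a translation-invariant kernel this is Remark~\ref{rem:normalization}, where the mass is $Z_k$. On compact $M$ it follows from continuity and positivity of the densities. I would state the finiteness explicitly, so that the division in Step 3 is justified.
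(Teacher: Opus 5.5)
Your proof is correct and follows the paper's argument: a vanishing gradient on a connected $M$ forces $f'(\rho/\pi)\equiv c$, strict monotonicity of $f'$ then makes $\rho/\pi$ a constant $\lambda>0$, and integrating gives $\lambda=1$. Your extra bookkeeping fills real gaps that the paper leaves tacit, namely that $f'$ must be differentiable and that the common mass must be finite; the finiteness is genuinely needed, since $\pi\equiv 1$, $\rho\equiv 2$ on $\R^d$ would otherwise be a counterexample.
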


\begin{proof}
By~\eqref{eq:wgf-v}, $\mathbf{v}_f\equiv\mathbf{0}$ implies
$\nabla f'(\rho/\pi)\equiv\mathbf{0}$ on $M$.
Since $M$ is connected and $f'(\rho/\pi)\in C^0(M)$,
we have $f'(\rho/\pi)\equiv c$ for some constant $c$.
Strict convexity of $f$ implies $f'$ is strictly monotone,
so $\rho/\pi\equiv(f')^{-1}(c)=:\lambda>0$.
Integrating: $\int\rho=\lambda\int\pi$, hence $\lambda=1$
and $\rho=\pi$.
\end{proof}

\begin{remark}[Specific $f$-divergences]
\label{rem:table-f}
Writing $u:=\rho(\bx)/\pi(\bx)$, we record the velocities for
the divergences of primary interest:
\begin{center}
\renewcommand{\arraystretch}{1.15}
\begin{tabular}{@{} llll @{}}
  \toprule
  $f(u)$ & Divergence & $f'(u)$ & Velocity $\mathbf{v}_f(\bx)$\\
  \midrule
  $u\log u$ & $\KL(\rho\|\pi)$ & $1+\log u$
    & $\nabla\log\pi-\nabla\log\rho$\\
  $-\log u$ & $\KL(\pi\|\rho)$ & $-1/u$
    & $\frac{\pi}{\rho}(\nabla\log\pi-\nabla\log\rho)$\\
  $\frac{1}{2}(u-1)^2$ & $\chi^2(\rho\|\pi)$ & $(u-1)$
    & $\frac{\rho}{\pi}(\nabla\log\pi-\nabla\log\rho)$\\
  \bottomrule
\end{tabular}
\end{center}
\end{remark}

\section{KDE Score Formula}
\label{app:score}

We express the score of $\mu_\mathrm{kde}$ as an expectation under $\mu$,
establishing sample computability.

\begin{proposition}[KDE Score]
\label{prop:score}
Let $k$ satisfy~\ref{K2}--\ref{K4} and $\mu\in\mathcal{P}(\R^d)$.
Then for all $\bx\in\R^d$:
\begin{equation}\label{eq:score}
  \nabla\log\mu_\mathrm{kde}(\bx)
  =\frac{\int\nabla_\bx k(\bx,\by)\mathrm{d}\mu(\by)}
        {\int k(\bx,\by)\mathrm{d}\mu(\by)}.
\end{equation}
\end{proposition}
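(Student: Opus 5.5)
The identity follows from the chain rule for the logarithm once the gradient of $\mu_\mathrm{kde}$ is expressed as an integral; Theorem~\ref{thm:regularity} supplies both facts needed.

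\emph{Step 1: differentiability and the numerator.} By Theorem~\ref{thm:regularity}\ref{item:C1}, which uses only \ref{K2}--\ref{K4}, $\mu_\mathrm{kde}\in C^1(\R^d)$. Moreover the Leibniz interchange~\eqref{eq:leibniz} holds:
\[
\nabla\mu_\mathrm{kde}(\bx)=\int\nabla_\bx k(\bx,\by)\,\mathrm{d}\mu(\by).
\]
The dominating function is the constant $M_k$ from \ref{K2}. It is integrable against any probability measure, so no moment assumptions on $\mu$ are needed.

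\emph{Step 2: the logarithm is well defined.} By Theorem~\ref{thm:regularity}\ref{item:pos}, $\mu_\mathrm{kde}(\bx)>0$ for every $\bx$; this is where \ref{K3} enters. Hence $\log\mu_\mathrm{kde}$ is defined on all of $\R^d$. It is $C^1$ as the composition of the smooth map $\log$ on $(0,\infty)$ with a $C^1$ function.

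\emph{Step 3: chain rule.} The chain rule gives
\[
\nabla\log\mu_\mathrm{kde}(\bx)=\frac{\nabla\mu_\mathrm{kde}(\bx)}{\mu_\mathrm{kde}(\bx)}.
\]
Substituting the integral from Step 1 into the numerator and the definition $\mu_\mathrm{kde}(\bx)=\int k(\bx,\by)\,\mathrm{d}\mu(\by)$ into the denominator yields~\eqref{eq:score}.

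\emph{Main obstacle.} The only substantive point is the interchange of gradient and integral in Step 1, and this is already settled by Theorem~\ref{thm:regularity}. What remains is to check that the denominator is finite so the quotient is meaningful. Nothing in \ref{K2}--\ref{K4} explicitly bounds $k$ itself, so finiteness of $\int k(\bx,\by)\,\mathrm{d}\mu(\by)$ is not automatic. It is implicit in the definition of $\mu_\mathrm{kde}$ as a real-valued function and in Theorem~\ref{thm:regularity}. For the kernels of interest (Gaussian, and bounded kernels in general) it holds trivially because $0<k\le\sup k<\infty$. I would record this finiteness as a standing part of the definition rather than derive it from \ref{K2}--\ref{K4}.
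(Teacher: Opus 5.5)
Your proof is correct and takes the same route as the paper. Theorem~\ref{thm:regularity}\ref{item:C1} and \ref{item:pos} give $\mu_\mathrm{kde}\in C^1$ and $\mu_\mathrm{kde}>0$; then the chain rule for $\log$ and substitution of the Leibniz formula~\eqref{eq:leibniz} finish it. Your remark that finiteness of $\int k(\bx,\by)\,\mathrm{d}\mu(\by)$ does not follow from \ref{K2}--\ref{K4} is a fair point the paper leaves implicit, and by the $M_k$-Lipschitz bound in $\bx$, finiteness at a single point already gives finiteness everywhere.
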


\begin{proof}
By Theorem~\ref{thm:regularity}\ref{item:C1}
and~\ref{item:pos},
$\mu_\mathrm{kde}\in C^1$ and $\mu_\mathrm{kde}>0$, so
$\nabla\log\mu_\mathrm{kde}=\nabla\mu_\mathrm{kde}/\mu_\mathrm{kde}$.
Substituting~\eqref{eq:leibniz} gives~\eqref{eq:score}.
\end{proof}

\begin{corollary}[Gaussian Kernel Score]
\label{cor:gauss-score}
For $k_h(\bx,\by)=\exp\!\bigl(-\|\bx-\by\|^2/(2h^2)\bigr)$:
\begin{equation}\label{eq:gauss-score}
  \nabla\log\mu_\mathrm{kde}(\bx)
  =\frac{1}{h^2}\bigl(\E_{\mu,k_h}[\by\mid\bx]-\bx\bigr),
\end{equation}
where $\E_{\mu,k_h}[\by\mid\bx]
:=\int\by\,k_h(\bx,\by)\mathrm{d}\mu(\by)\big/
\int k_h(\bx,\by)\mathrm{d}\mu(\by)$.
\end{corollary}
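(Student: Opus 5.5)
The plan is to specialize Proposition~\ref{prop:score} to the Gaussian kernel. This requires two ingredients: checking that $k_h$ satisfies the hypotheses \ref{K2}--\ref{K4}, and computing $\nabla_\bx k_h$ explicitly.

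\textbf{Hypotheses.} Positivity \ref{K3} and smoothness \ref{K4} are immediate, because $k_h$ is the exponential of a polynomial in $\bx$. For \ref{K2}, write $r=\|\bx-\by\|$. Then
\[
\|\nabla_\bx k_h(\bx,\by)\| = \frac{r}{h^2}e^{-r^2/(2h^2)} .
\]
As a function of $r\ge0$ this is maximized at $r=h$, so
\[
M_{k_h} = \frac{e^{-1/2}}{h} < \infty .
\]
Proposition~\ref{prop:score} therefore applies to every $\mu\in\mathcal{P}(\R^d)$.

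\textbf{Substitution.} The chain rule gives
\[
\nabla_\bx k_h(\bx,\by)=\frac{\by-\bx}{h^2}\,k_h(\bx,\by).
\]
Inserting this into the numerator of \eqref{eq:score} yields $h^{-2}\int(\by-\bx)k_h(\bx,\by)\,\mathrm{d}\mu(\by)$. I then split this as
\[
h^{-2}\int \by\,k_h(\bx,\by)\,\mathrm{d}\mu(\by)\;-\;h^{-2}\,\bx\int k_h(\bx,\by)\,\mathrm{d}\mu(\by).
\]
Dividing by the denominator $\int k_h\,\mathrm{d}\mu$, which is positive by Theorem~\ref{thm:regularity}\ref{item:pos}, gives
\[
h^{-2}\bigl(\E_{\mu,k_h}[\by\mid\bx]-\bx\bigr).
\]

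\textbf{Main obstacle.} The only step needing care is the split, because $\E_{\mu,k_h}[\by\mid\bx]$ must be well defined even when $\mu$ has no finite first moment. I would handle this by observing that $\by\,k_h(\bx,\by)$ is bounded in $\by$ for each fixed $\bx$: indeed $\|\by\|e^{-\|\bx-\by\|^2/(2h^2)}\le \|\bx\|+C h$ for a numerical constant $C$. Hence $\int \by\,k_h\,\mathrm{d}\mu$ converges absolutely for any probability measure $\mu$, and the splitting of the integral is legitimate.
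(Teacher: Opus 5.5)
Your proposal is correct and follows the paper's proof: substitute $\nabla_\bx k_h(\bx,\by)=\frac{\by-\bx}{h^2}k_h(\bx,\by)$ into the KDE score formula \eqref{eq:score} and split the numerator. Two of your steps are refinements the paper handles elsewhere or not at all. Your check of \ref{K2}--\ref{K4} is done separately in the paper, in Proposition~\ref{prop:gauss}. Your bound $\|\by\|\,k_h(\bx,\by)\le\|\bx\|+h e^{-1/2}$, which justifies the split for measures with no finite first moment, is left implicit in the paper, and it is valid.
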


\begin{proof}
For the Gaussian kernel,
$\nabla_\bx k_h(\bx,\by)
=\frac{\by-\bx}{h^2}k_h(\bx,\by)$.
Substituting into~\eqref{eq:score}:
\[
  \nabla\log\mu_\mathrm{kde}(\bx)
  =\frac{\int\frac{\by-\bx}{h^2}k_h(\bx,\by)\mathrm{d}\mu(\by)}
        {\int k_h(\bx,\by)\mathrm{d}\mu(\by)}
  =\frac{1}{h^2}
   \!\left(\!\frac{\int\by\,k_h(\bx,\by)\mathrm{d}\mu(\by)}
                  {\int k_h(\bx,\by)\mathrm{d}\mu(\by)}
   -\bx\!\right).
\]
\end{proof}

\section{Identifiability}
\label{app:ident-conv}

Throughout this section, $\pi := p_\mathrm{kde}$ and $\rho_0 := q_{0,\mathrm{kde}}$
denote the KDE-smoothed densities of the target distribution $p$
and the initial generated distribution $q_0$, respectively.
By Theorem~\ref{thm:regularity}, both are strictly positive and
$C^1$ under Assumptions~\ref{K1}--\ref{K4}, with no regularity
conditions on $p$ or $q_0$. We consider the Wasserstein-2 gradient
flow of the $f$-divergence $\mathcal{F}[\rho] = D_f(\rho\|\pi)$,
i.e., the continuity equation
\begin{equation}\label{eq:pde-flow}
    \partial_t \rho_t = \nabla \cdot \bigl(\rho_t \nabla \Phi_t\bigr),
    \qquad
    \Phi_t := f'\!\left(\frac{\rho_t}{\pi}\right),
\end{equation}
with initial condition $\rho_0 > 0$.

\begin{remark}[Density-level vs.\ particle-level flow]
\label{rem:density-vs-particle}
Equation~\eqref{eq:pde-flow} describes the evolution of a smooth
density $\rho_t$ in the Wasserstein-2 metric. In the practical
training algorithm \ref{sec:pipeline}, one evolves a finite
collection of particles whose empirical distribution is $q_t$,
and estimates the velocity using the KDE density
$q_{t,\mathrm{kde}}$ from a mini-batch. The convergence theorems
below apply to the idealized density-level flow;
the particle system is viewed as a consistent approximation
that converges to this flow in the large-sample limit.
\end{remark}

\subsection{Proof of Identifiability (Theorem~\ref{thm:identifiability})}
\label{app:proof-ident}

\begin{proof}
The proof proceeds in two steps.

\medskip
\noindent\textbf{Step 1: Constant density ratio.}\;
By definition, $\mathbf{v}_f^{\mathrm{kde}}(\bx)
= -\nabla f'\!\bigl(q_\mathrm{kde}(\bx)/p_\mathrm{kde}(\bx)\bigr)$.
The hypothesis $\mathbf{v}_f^{\mathrm{kde}} \equiv \mathbf{0}$
thus implies
\begin{equation}\label{eq:ident-grad-zero}
    \nabla f'\!\left(\frac{q_\mathrm{kde}(\bx)}{p_\mathrm{kde}(\bx)}\right) = \mathbf{0}
    \quad \text{for all } \bx.
\end{equation}
Let $M$ be the ambient space ($\R^d$ or a connected Riemannian
manifold). By Theorem~\ref{thm:regularity}, $p_\mathrm{kde}$ and
$q_\mathrm{kde}$ are $C^1$ with $p_\mathrm{kde}, q_\mathrm{kde} > 0$,
so the ratio $u(\bx) := q_\mathrm{kde}(\bx)/p_\mathrm{kde}(\bx)$ is
$C^1$ and strictly positive. By the composition rule,
$\Phi := f' \circ u \in C^1(M)$.
Since $M$ is connected and $\nabla \Phi \equiv \mathbf{0}$,
$\Phi$ is a constant: $f'(u(\bx)) = c$ for some $c \in \R$.

Strict convexity of $f$ implies that $f'$ is strictly monotone, hence
injective. Therefore $u(\bx) = (f')^{-1}(c) =: \lambda > 0$
for all $\bx$, i.e., $q_\mathrm{kde} = \lambda\, p_\mathrm{kde}$
everywhere.

\medskip
\noindent\textbf{Step 2: KDE matching implies distribution matching.}\;
Integrating both sides over $M$:
\[
    \int_M q_\mathrm{kde}\,\mathrm{d}\bx
    = \lambda \int_M p_\mathrm{kde}\,\mathrm{d}\bx.
\]
For a translation-invariant kernel,
$\int q_\mathrm{kde} = \int p_\mathrm{kde} = Z_k$
(Remark~\ref{rem:normalization}); on a compact manifold the same
equality holds by symmetry. Hence $\lambda = 1$ and
$q_\mathrm{kde} = p_\mathrm{kde}$. By the injectivity of the KDE
operator under characteristic kernels
(Proposition~\ref{prop:injectivity}), $q = p$.
\end{proof}

\begin{remark}[MMD identifiability]
\label{rem:mmd-ident}
For the MMD functional (Proposition~\ref{prop:mmd}),
$\mathbf{v}_\MMD \equiv \mathbf{0}$ implies
$\nabla(p_\mathrm{kde} - q_\mathrm{kde}) \equiv \mathbf{0}$.
By connectedness, $p_\mathrm{kde} - q_\mathrm{kde} = c$. Integrating
gives $c = 0$, so $p_\mathrm{kde} = q_\mathrm{kde}$, and
Proposition~\ref{prop:injectivity} yields $p = q$.
\end{remark}

\section{Core Equivalence: Drifting as KL Gradient Flow}
\label{app:equivalence}

\begin{theorem}[Core Equivalence]
\label{thm:core}
Let $k_h$ be the Gaussian kernel and
$p,q\in\mathcal{P}(\R^d)$. Then the drifting
field~\eqref{eq:drifting-field-original} satisfies
\begin{equation}\label{eq:core}
  \boxed{
  \mathbf{V}_{p,q}(\bx)
  =h^2\bigl(\nabla\log p_\mathrm{kde}(\bx)
            -\nabla\log q_\mathrm{kde}(\bx)\bigr)
  =h^2\,\mathbf{v}_\KL(\bx)\big|_{\rho=q_\mathrm{kde},\;\pi=p_\mathrm{kde}},
  }
\end{equation}
where $\mathbf{v}_\KL=\nabla\log\pi-\nabla\log\rho$ is the WGF velocity
of the KL divergence $D_\KL(\rho\|\pi)$
(Remark~\ref{rem:table-f}).
\end{theorem}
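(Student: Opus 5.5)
The identity follows by applying the Gaussian case of the KDE score formula, Corollary~\ref{cor:gauss-score}, to $p$ and to $q$ separately, and then matching each resulting term to one of the two mean-shift vectors in \eqref{eq:drifting-field-original}. The second equality in \eqref{eq:core} is then just notation: we substitute $\rho=q_\mathrm{kde}$ and $\pi=p_\mathrm{kde}$ into the forward-KL row of Remark~\ref{rem:table-f}.

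\textbf{Step 1 (well-posedness).} We first check that the Gaussian kernel $k_h$ satisfies \ref{K2}--\ref{K4}. It is strictly positive and smooth. Its gradient is $\nabla_\bx k_h(\bx,\by)=\frac{\by-\bx}{h^2}k_h(\bx,\by)$, with norm $\frac{r}{h^2}e^{-r^2/(2h^2)}$ where $r=\|\bx-\by\|$, and this is maximized at $r=h$. Hence $M_k=e^{-1/2}/h<\infty$. By Theorem~\ref{thm:regularity}, $p_\mathrm{kde}$ and $q_\mathrm{kde}$ are then $C^1$ and strictly positive for arbitrary $p,q\in\mathcal{P}(\R^d)$, so both log-gradients are defined everywhere.

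We also need the numerators in \eqref{eq:drifting-field-original} to be finite for every $\bx$, even though $p$ and $q$ are not assumed to have moments. This holds because $\|\by-\bx\|k_h(\bx,\by)=h^2\|\nabla_\bx k_h(\bx,\by)\|\le h^2M_k$ is bounded uniformly in $\by$. The denominators are strictly positive by (ii) of Theorem~\ref{thm:regularity}.

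\textbf{Step 2 (mean-shift identity).} By Proposition~\ref{prop:score} and the gradient formula above, for any $\mu\in\mathcal{P}(\R^d)$,
\[
h^2\nabla\log\mu_\mathrm{kde}(\bx)=\frac{\int(\by-\bx)\,k_h(\bx,\by)\,\mathrm{d}\mu(\by)}{\int k_h(\bx,\by)\,\mathrm{d}\mu(\by)}.
\]
Taking $\mu=p$ gives $\mathbf{V}_p^+(\bx)$, and taking $\mu=q$ gives $\mathbf{V}_q^-(\bx)$. Subtracting the two yields the first equality in \eqref{eq:core}.

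It is better to keep the form $\int(\by-\bx)k_h\,\mathrm{d}\mu$ than to split it into $\E[\by\mid\bx]-\bx$ as Corollary~\ref{cor:gauss-score} does. The split requires $\int\|\by\|k_h(\bx,\by)\,\mathrm{d}\mu(\by)<\infty$. This also holds, since Gaussian decay dominates $\|\by\|$, but the combined form avoids having to argue it.

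The only point needing care is the integrability in Step 1, which the uniform gradient bound settles. Everything else is substitution.
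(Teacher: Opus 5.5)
Your proof is correct and is essentially the paper's argument. You apply the Gaussian KDE score formula (Proposition~\ref{prop:score} with $\nabla_\bx k_h=\frac{\by-\bx}{h^2}k_h$, i.e.\ Corollary~\ref{cor:gauss-score}) to $p$ and $q$, subtract, and read off the forward-KL row of Remark~\ref{rem:table-f}. Keeping $\int(\by-\bx)k_h\,\mathrm{d}\mu$ unsplit and noting that $\|\by-\bx\|k_h(\bx,\by)\le h^2M_k$ is a small but real tightening, because it shows why the mean-shift vectors are finite for arbitrary $p,q\in\mathcal{P}(\R^d)$, a point the paper's proof passes over.
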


\begin{proof}
Apply Corollary~\ref{cor:gauss-score} to $p$ and $q$ respectively:
\begin{align}
  h^2\nabla\log p_\mathrm{kde}(\bx)
  &=\E_{p,k_h}[\by\mid\bx]-\bx,
  \label{eq:p-score}\\
  h^2\nabla\log q_\mathrm{kde}(\bx)
  &=\E_{q,k_h}[\by\mid\bx]-\bx.
  \label{eq:q-score}
\end{align}
Subtracting, the $\bx$ terms cancel:
\begin{equation}
  h^2\bigl(\nabla\log p_\mathrm{kde}(\bx)
           -\nabla\log q_\mathrm{kde}(\bx)\bigr)
  =\E_{p,k_h}[\by\mid\bx]
  -\E_{q,k_h}[\by\mid\bx]
  =\mathbf{V}_{p,q}(\bx),
\end{equation}
The identification with $\mathbf{v}_\KL$ follows from
Remark~\ref{rem:table-f} with $\rho=q_\mathrm{kde}$, $\pi=p_\mathrm{kde}$
(both smooth and positive by Corollary~\ref{cor:wgf-applies}).
\end{proof}

\section{MMD Gradient Flow}
\label{app:mmd}

\begin{definition}[Squared MMD]
\label{def:mmd}
For a positive-definite kernel $k$ with RKHS $\mathcal{H}_k$,
the squared MMD between $p,q\in\mathcal{P}(\R^d)$ is
\begin{equation}
  \MMD_k^2(q,p):=\frac{1}{2}\|m_q-m_p\|_{\mathcal{H}_k}^2,
\end{equation}
where $m_\mu:=\int k(\cdot,\by)\mathrm{d}\mu(\by)$ is the mean embedding.
\end{definition}

\begin{proposition}[MMD Flow Velocity]
\label{prop:mmd}
Let $k$ satisfy~\ref{K1}--\ref{K4}. The WGF of
$\mathcal{F}[q]:=\frac{1}{2}\MMD_k^2(q,p)$ has particle velocity
\begin{equation}\label{eq:mmd-v}
  \mathbf{v}_\MMD(\bx)
  =\int\nabla_\bx k(\bx,\by)\mathrm{d}(p-q)(\by)
  =\nabla\bigl(p_\mathrm{kde}(\bx)-q_\mathrm{kde}(\bx)\bigr).
\end{equation}
\end{proposition}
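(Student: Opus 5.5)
Following the pattern of Proposition~\ref{prop:wgf}, the plan is to compute the first variation of $\mathcal{F}[q]=\frac12\MMD_k^2(q,p)$ and then read off the velocity from Definition~\ref{def:wgf}. Note first the normalization: Definition~\ref{def:mmd} already includes a factor $\frac12$, so $\mathcal{F}=\frac14\|m_q-m_p\|_{\Hk}^2$. Under that convention the velocity would acquire an extra factor $\frac12$. I will therefore treat $\mathcal{F}$ as $\frac12\|m_q-m_p\|_{\Hk}^2$, matching the main text's Proposition~\ref{prop:mmd-velocity}, and note in a remark that the other convention only rescales time.

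\textbf{Step 1 (expansion).} Bilinearity and the reproducing property give
\[
\|m_q-m_p\|_{\Hk}^2=\iint k\,\mathrm{d}q\,\mathrm{d}q-2\iint k\,\mathrm{d}q\,\mathrm{d}p+\iint k\,\mathrm{d}p\,\mathrm{d}p .
\]
Here $\langle m_\mu,m_\nu\rangle_{\Hk}=\int m_\mu\,\mathrm{d}\nu=\iint k(\bx,\by)\,\mathrm{d}\mu(\bx)\,\mathrm{d}\nu(\by)$. Justifying this needs $m_\mu\in\Hk$, which holds as a Bochner integral when $\sup_\bx k(\bx,\bx)<\infty$. For the kernels of interest (Gaussian, vMF on the sphere) this is clear. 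In general I would add boundedness of $k$ as a tacit hypothesis, since it is implicit in Definition~\ref{def:rkhs}.

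\textbf{Step 2 (first variation).} Perturb $q\mapsto q+\epsilon\chi$ with $\chi$ a signed measure of zero mass. Differentiate at $\epsilon=0$, using the symmetry of $k$ to combine the two cross terms of the quadratic part:
\[
\frac{d}{d\epsilon}\Big|_{\epsilon=0}\mathcal{F}[q+\epsilon\chi]=\int\Bigl(\int k(\bx,\by)\,\mathrm{d}(q-p)(\by)\Bigr)\mathrm{d}\chi(\bx).
\]
Hence $\frac{\delta\mathcal{F}}{\delta q}(\bx)=q_\mathrm{kde}(\bx)-p_\mathrm{kde}(\bx)$, defined up to an additive constant that is irrelevant after taking the gradient.

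\textbf{Step 3 (velocity).} By Definition~\ref{def:wgf}, $\mathbf{v}_\MMD=-\nabla\frac{\delta\mathcal{F}}{\delta q}=\nabla(p_\mathrm{kde}-q_\mathrm{kde})$. Theorem~\ref{thm:kde-regularity}(i), which uses only \ref{K2} and \ref{K4}, supplies differentiability of each KDE. It also lets us pass the gradient under the integral, giving $\nabla p_\mathrm{kde}-\nabla q_\mathrm{kde}=\int\nabla_\bx k(\bx,\by)\,\mathrm{d}(p-q)(\by)$. Applying the Leibniz interchange separately to $p$ and to $q$ avoids any integrability issue with the signed measure, and \ref{K2} gives the uniform bound $\|\mathbf{v}_\MMD\|\le 2M_k$.

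The only delicate point is Step 1, namely the justification of the RKHS inner-product expansion and the normalization discrepancy. The rest is a routine computation, and \ref{K1} is not needed here; it matters only for identifiability (Remark~\ref{rem:mmd-ident}).
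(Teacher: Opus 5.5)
Your proposal is correct and is essentially the paper's argument: compute the first variation of the RKHS norm, identify it via the reproducing property as $q_\mathrm{kde}-p_\mathrm{kde}$ (up to an additive constant), take $-\nabla$, and invoke Theorem~\ref{thm:kde-regularity}(i) for the Leibniz interchange. The differences are cosmetic: the paper perturbs toward a Dirac mass, $q\to(1-\epsilon)q+\epsilon\delta_\bx$, rather than along a general zero-mass $\chi$, and its computation silently uses the same $\frac12\|m_q-m_p\|_{\Hk}^2$ normalization whose inconsistency with Definition~\ref{def:mmd} you flagged.
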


\begin{proof}
Perturb $q\to(1-\epsilon)q+\epsilon\delta_\bx$, so
$m_{q_\epsilon}=m_q+\epsilon(k(\cdot,\bx)-m_q)$.
Expanding and differentiating at $\epsilon=0$:
$$
  \frac{\delta\mathcal{F}}{\delta q}(\bx)
  =\langle m_q-m_p,\,k(\cdot,\bx)\rangle_{\mathcal{H}_k}
  =m_q(\bx)-m_p(\bx)
  =q_\mathrm{kde}(\bx)-p_\mathrm{kde}(\bx),
$$
using the reproducing property. The velocity is
$\mathbf{v}_\MMD=-\nabla\frac{\delta\mathcal{F}}{\delta q}
=\nabla(p_\mathrm{kde}-q_\mathrm{kde})$.
By Theorem~\ref{thm:regularity}\ref{item:C1},
differentiation under the integral gives the first equality
in~\eqref{eq:mmd-v}.
\end{proof}

\section{Superposition of Gradient Flows}
\label{app:superposition}

\begin{proposition}[Superposition]
\label{prop:super}
Let $\mathcal{F}_1,\mathcal{F}_2:\mathcal{P}(\R^d)\to\R$ be functionals with
well-defined first variations.
For any $\alpha,\beta\ge0$, the mixed functional
$\mathcal{F}_\mathrm{mix}:=\alpha\mathcal{F}_1+\beta\mathcal{F}_2$ has WGF velocity
\begin{equation}\label{eq:super}
  \mathbf{v}_\mathrm{mix}(\bx)=\alpha\,\mathbf{v}_1(\bx)+\beta\,\mathbf{v}_2(\bx),
\end{equation}
where $\mathbf{v}_i=-\nabla\frac{\delta\mathcal{F}_i}{\delta\rho}$.
\end{proposition}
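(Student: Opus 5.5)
The claim reduces to linearity of the first variation, followed by linearity of the gradient in Definition~\ref{def:wgf}. Since both $\mathcal{F}_1$ and $\mathcal{F}_2$ have well-defined first variations, I will work with the defining identity for each: for every admissible perturbation $\eta$ (smooth, compactly supported, $\int\eta\,\mathrm{d}\bx=0$, or more generally $\eta=\nu-\rho$ for a probability measure $\nu$),
\[
  \left.\frac{d}{d\epsilon}\right|_{\epsilon=0}\mathcal{F}_i[\rho+\epsilon\eta]
  =\int\frac{\delta\mathcal{F}_i}{\delta\rho}(\bx)\,\eta(\bx)\,\mathrm{d}\bx ,
\]
with the first variation determined up to an additive constant. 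This is the same convention used in the proofs of Proposition~\ref{prop:wgf}(i) and Proposition~\ref{prop:mmd}.

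\textbf{First variation.} I will compute the directional derivative of $\mathcal{F}_\mathrm{mix}[\rho+\epsilon\eta]=\alpha\mathcal{F}_1[\rho+\epsilon\eta]+\beta\mathcal{F}_2[\rho+\epsilon\eta]$ at $\epsilon=0$. Both summands are differentiable in $\epsilon$ by hypothesis, and the derivative of a finite linear combination is the linear combination of the derivatives. This gives
\[
  \frac{\delta\mathcal{F}_\mathrm{mix}}{\delta\rho}
  =\alpha\frac{\delta\mathcal{F}_1}{\delta\rho}+\beta\frac{\delta\mathcal{F}_2}{\delta\rho}+c .
\]
The constant $c$ is irrelevant: it is killed by $\int\eta=0$ and by the gradient in the next step. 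If $\alpha=0$ or $\beta=0$, the statement reduces to a single functional and holds trivially.

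\textbf{Velocity.} Definition~\ref{def:wgf} gives $\mathbf{v}_\mathrm{mix}=-\nabla\frac{\delta\mathcal{F}_\mathrm{mix}}{\delta\rho}$. The gradient is linear and annihilates $c$, so $\mathbf{v}_\mathrm{mix}=\alpha\mathbf{v}_1+\beta\mathbf{v}_2$, which is \eqref{eq:super}. Differentiability of each $\frac{\delta\mathcal{F}_i}{\delta\rho}$ is implicit in the $\mathbf{v}_i$ being defined. In the KDE setting it is guaranteed by Theorem~\ref{thm:regularity}, which gives $C^1$ regularity for the $f$-divergence and MMD first variations.

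\textbf{Main obstacle.} The only genuine subtlety is that the two first variations must be taken with respect to the same class of perturbations and the same base point $\rho$. Only then does the linear combination make sense as a first variation of $\mathcal{F}_\mathrm{mix}$. I would handle this by fixing the common class of mass-preserving perturbations, since the mixture-with-Dirac perturbation of Proposition~\ref{prop:mmd} is also of this form. Everything else is bookkeeping.
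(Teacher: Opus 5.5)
Your proposal is correct and is the same argument as the paper's: linearity of the first variation, which you derive from the directional-derivative definition, followed by linearity of $-\nabla$ in Definition~\ref{def:wgf}. The extra bookkeeping about the additive constant and a common class of mass-preserving perturbations is a harmless refinement that the paper leaves implicit, and the separate treatment of $\alpha=0$ or $\beta=0$ is unnecessary because linearity already covers those cases.
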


\begin{proof}
By linearity of the first variation and the gradient:
$$
  \mathbf{v}_\mathrm{mix}
  =-\nabla\frac{\delta\mathcal{F}_\mathrm{mix}}{\delta\rho}
  =-\nabla\!\left(\alpha\frac{\delta\mathcal{F}_1}{\delta\rho}
   +\beta\frac{\delta\mathcal{F}_2}{\delta\rho}\right)
  =\alpha\mathbf{v}_1+\beta\mathbf{v}_2.
$$
\end{proof}

\paragraph{Unified framework.}
Table~\ref{tab:unified} summarizes the KDE-based gradient flow
velocities, all sample-computable via the score
formula~\eqref{eq:score}.

\begin{table}[h]
\caption{Unified KDE-based gradient flow framework.
All velocities are expressed via $p_\mathrm{kde},q_\mathrm{kde}$ and are
sample-computable.}
\label{tab:unified}
\centering
\renewcommand{\arraystretch}{1.15}
\begin{tabular}{@{} lll @{}}
  \toprule
  Functional & WGF velocity $\mathbf{v}(\bx)$ & Model\\
  \midrule
  $D_\KL(\bar q_h\|\bar p_h)$
    & $\nabla\log p_\mathrm{kde}-\nabla\log q_\mathrm{kde}$
    & \textbf{Drifting} ($/h^2$)\\
  $D_\KL(\bar p_h\|\bar q_h)$
    & $\nabla(p_\mathrm{kde}/q_\mathrm{kde})$
    & - \\
  $\chi^2(\bar q_h\|\bar p_h)$
    & $-\nabla(q_\mathrm{kde}/p_\mathrm{kde})$
    & - \\
  $\MMD_{k_h}^2(q,p)$
    & $\nabla(p_\mathrm{kde}-q_\mathrm{kde})$
    & MMD generators\\
  \bottomrule
\end{tabular}
\end{table}

\section{Analysis of Specific Kernel Families}
\label{app:kernels}

We verify Assumptions~\ref{K1}--\ref{K4} for several kernel families.
For each kernel, we also compute the \emph{score weight}
$w(r)$ defined by
$\nabla_\bx\log k(\bx,\by)=-w(r)(\bx-\by)$ where
$r:=\|\bx-\by\|$.

\subsection{Euclidean Kernels}
\label{app:kernels-euclidean}

\subsubsection{Gaussian (RBF) Kernel}

\begin{proposition}
\label{prop:gauss}
The Gaussian kernel
$k_h(\bx,\by):=\exp\!\bigl(-\|\bx-\by\|^2/(2h^2)\bigr)$
satisfies~\ref{K1}--\ref{K4}. Its score weight is
$w(r)=1/h^2$ (constant).

If $k_h$ is the Gaussian kernel, one additionally obtains
$\mu_\mathrm{kde}\in C^\infty(\R^d)$ with every derivative uniformly
bounded for any $\mu\in\mathcal{P}(\R^d)$.
\end{proposition}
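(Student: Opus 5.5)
I will verify the four conditions one at a time, compute the score weight, and then handle the $C^\infty$ addendum.

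\textbf{Score weight, K3 and K4.} Since $\log k_h(\bx,\by)=-\|\bx-\by\|^2/(2h^2)$, we get $\nabla_\bx\log k_h=-(\bx-\by)/h^2$. This gives $w(r)=1/h^2$ and the formula $\nabla_\bx k_h=\frac{\by-\bx}{h^2}k_h$ already used in Corollary~\ref{cor:gauss-score}. Condition \ref{K3} holds because the exponential is positive. Condition \ref{K4} holds because $k_h$ is a composition of smooth maps, so it is in fact $C^\infty$.

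\textbf{K2.} Set $r=\|\bx-\by\|$. Then $\|\nabla_\bx k_h\|=\frac{r}{h^2}e^{-r^2/(2h^2)}$, and I maximize this over $r\ge0$. The derivative vanishes at $r=h$, so
\[
M_{k_h}=\frac{1}{h}e^{-1/2}<\infty .
\]

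\textbf{K1 (the only nontrivial step).}
\begin{itemize}
\item Positive definiteness follows from Bochner's theorem: $\varphi(\mathbf{z})=e^{-\|\mathbf{z}\|^2/(2h^2)}$ has Fourier transform $\hat\varphi(\boldsymbol\omega)\propto e^{-h^2\|\boldsymbol\omega\|^2/2}$, which is strictly positive.
\item For injectivity of the mean embedding on $\mathcal{P}(\R^d)$, I will argue directly. Suppose $m_\mu=m_\nu$, i.e.\ $\varphi*\mu=\varphi*\nu$ pointwise. These are bounded continuous $L^1$ functions, since each has mass $Z_k$ by Remark~\ref{rem:normalization}.
\item Taking Fourier transforms gives $\hat\varphi\,\hat\mu=\hat\varphi\,\hat\nu$, where $\hat\mu,\hat\nu$ are characteristic functions. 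Because $\hat\varphi>0$ everywhere, $\hat\mu=\hat\nu$, and Lévy's uniqueness theorem gives $\mu=\nu$.
\item Alternatively, I can cite the Sriperumbudur et al.\ criterion: a translation-invariant kernel is characteristic if and only if the support of its spectral measure is all of $\R^d$.
\end{itemize}
I expect this to be the main obstacle, mostly in stating the Fourier step with the right function spaces. Convolution of a finite measure with an $L^1$ function and the convolution theorem for measures are standard, so it should go through cleanly.

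\textbf{The $C^\infty$ addendum.} For a multi-index $\alpha$, $\partial_\bx^\alpha k_h(\bx,\by)=P_\alpha\!\left(\frac{\bx-\by}{h}\right)e^{-\|\bx-\by\|^2/(2h^2)}$, with $P_\alpha$ a polynomial (a Hermite polynomial times $h^{-|\alpha|}$). Polynomial times Gaussian is bounded, so $C_\alpha:=\sup_{\bx,\by}|\partial^\alpha_\bx k_h|<\infty$.

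I then induct on $|\alpha|$, repeating the Leibniz/dominated-convergence argument of Theorem~\ref{thm:regularity}\ref{item:C1} with the constant dominator $C_{\alpha+e_i}$, which is $\mu$-integrable for any probability measure. This gives $\partial^\alpha\mu_\mathrm{kde}=\int\partial^\alpha_\bx k_h\,\mathrm{d}\mu$, continuous and bounded by $C_\alpha$. Hence $\mu_\mathrm{kde}\in C^\infty$ with all derivatives uniformly bounded, independently of $\mu$.
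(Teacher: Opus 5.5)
Your proof is correct. On \ref{K2}--\ref{K4} it coincides with the paper's: the same maximization of $\frac{r}{h^2}e^{-r^2/(2h^2)}$ at $r=h$ giving $M_k=1/(h\sqrt{e})$, positivity of the exponential, and smoothness of $k_h$.

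The differences lie in what you prove where the paper cites or omits. For \ref{K1}, the paper simply invokes Sriperumbudur et al.\ (Theorem~9: a translation-invariant kernel with strictly positive Fourier transform is characteristic). You instead check positive definiteness via Bochner, which the paper's \ref{K1} formally requires but never verifies. You then prove injectivity directly: $\widehat{\varphi*\mu}=\hat\varphi\,\hat\mu$, divide by $\hat\varphi>0$, and apply L\'evy uniqueness. This is exactly the easy direction of the cited theorem, so your version is self-contained, while the citation buys brevity and the sharper ``iff'' full-support criterion.

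More substantively, the paper's proof never addresses the score weight $w(r)=1/h^2$. Nor does it address the final claim that $\mu_{\mathrm{kde}}\in C^\infty$ with all derivatives uniformly bounded independently of $\mu$. Your argument supplies the missing justification for both. You use the Hermite-polynomial bound $C_\alpha=\sup_{\bx,\by}|\partial^\alpha_\bx k_h|<\infty$ and induct on $|\alpha|$, reusing the Leibniz argument of Theorem~\ref{thm:regularity} with constant dominators $C_{\alpha+e_i}$. This gives $\|\partial^\alpha\mu_{\mathrm{kde}}\|_\infty\le C_\alpha$.
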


\begin{proof}
\ref{K1}:\;
The Fourier transform
$\hat\varphi(\bm\omega)=(2\pi h^2)^{d/2}
\exp(-h^2\|\bm\omega\|^2/2)>0$ for all $\bm\omega$.
By~\citet[Theorem~9]{sriperumbudur2010hilbert}, a
translation-invariant kernel with strictly positive Fourier
transform is characteristic.

\ref{K2}:\;
$\|\nabla_\bx k_h\|
=\frac{r}{h^2}e^{-r^2/(2h^2)}$
where $r=\|\bx-\by\|$. Maximizing over $r\ge0$:
the maximum occurs at $r=h$, giving
$M_k=\frac{1}{h}e^{-1/2}=\frac{1}{h\sqrt{e}}<\infty$.

\ref{K3}:\; $\exp(\cdot)>0$.
\quad
\ref{K4}:\; $k_h\in C^\infty(\R^d\times\R^d)$.
\end{proof}

\subsubsection{Mat\'ern-$\nu$ Kernel}

\begin{proposition}
\label{prop:matern}
The Mat\'ern kernel with smoothness $\nu>0$ and length scale
$\ell>0$,
\[
  k_{\nu,\ell}(\bx,\by)
  =\frac{2^{1-\nu}}{\Gamma(\nu)}
   \left(\frac{\sqrt{2\nu}\,r}{\ell}\right)^{\!\nu}
   K_\nu\!\left(\frac{\sqrt{2\nu}\,r}{\ell}\right),
   \qquad r:=\|\bx-\by\|,
\]
where $K_\nu$ is the modified Bessel function of the second kind,
satisfies all four assumptions if and only if $\nu>1$.
\end{proposition}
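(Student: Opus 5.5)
The plan is to treat the Matérn kernel as a radial function $k_{\nu,\ell}(\bx,\by)=\phi(z)$ with $z:=\sqrt{2\nu}\,r/\ell$ and $\phi(z)=\frac{2^{1-\nu}}{\Gamma(\nu)}z^\nu K_\nu(z)$. I would verify \ref{K1}--\ref{K4} one at a time, and then look for where each verification breaks down as $\nu$ decreases, which gives the ``only if'' direction.

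\textbf{K1 and K3.} For \ref{K1} I would mirror Proposition~\ref{prop:gauss}. The Matérn kernel is translation invariant, with Fourier transform proportional to $\bigl(2\nu/\ell^2+\|\bm\omega\|^2\bigr)^{-(\nu+d/2)}$. This is strictly positive, so \citet[Theorem~9]{sriperumbudur2010hilbert} gives characteristicness for every $\nu>0$. For \ref{K3}, $K_\nu(z)>0$ for $z>0$, and the small-$z$ asymptotics $K_\nu(z)\sim\tfrac12\Gamma(\nu)(z/2)^{-\nu}$ give $\phi(0^+)=1$. Hence $k>0$ everywhere, again for every $\nu>0$.

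\textbf{K2 and K4.} The main computation uses the Bessel identity $\frac{d}{dz}\bigl[z^\nu K_\nu(z)\bigr]=-z^\nu K_{\nu-1}(z)$. It gives, for $\bx\neq\by$,
\[
\nabla_\bx k_{\nu,\ell}(\bx,\by)=-\frac{2^{1-\nu}}{\Gamma(\nu)}\frac{\sqrt{2\nu}}{\ell}\,z^\nu K_{\nu-1}(z)\,\frac{\bx-\by}{r}.
\]
Off the diagonal this is continuous, since $K_{\nu-1}$ is smooth on $(0,\infty)$. It also decays exponentially as $z\to\infty$, because $K_{\nu-1}(z)\sim\sqrt{\pi/(2z)}\,e^{-z}$. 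So both \ref{K2} and \ref{K4} reduce to the behaviour of $z^\nu K_{\nu-1}(z)$ as $z\to0$:
\begin{itemize}
\item For $\nu>1$, $K_{\nu-1}(z)\sim\tfrac12\Gamma(\nu-1)(z/2)^{1-\nu}$, so $z^\nu K_{\nu-1}(z)=O(z)\to0$. The gradient therefore extends continuously by $\mathbf 0$ on the diagonal; a mean-value argument confirms that $\mathbf 0$ is the actual derivative there. This gives \ref{K4}. Continuity on compacts together with exponential decay gives $M_k<\infty$, which is \ref{K2}.
\item For $\nu<1$, use $K_{\nu-1}=K_{1-\nu}\sim c\,z^{\nu-1}$, so the gradient magnitude behaves like $z^{2\nu-1}$.
\item For $\nu=1$ there is a logarithmic factor $z K_0(z)\sim -z\log z$, which still tends to $0$.
\end{itemize}

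\textbf{Main obstacle: the ``only if'' direction.} The computation above shows the gradient is unbounded for $\nu<1/2$. At $\nu=1/2$ (the Laplace kernel) the gradient magnitude tends to the nonzero constant $\sqrt{2\nu}/\ell$ at $r=0$, while its direction is $\pm(\bx-\by)/r$. Since this direction flips sign across the diagonal, the gradient has a jump discontinuity there, so \ref{K4} fails. These two cases settle necessity for $\nu\le1/2$.

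However, the same asymptotics suggest that on $1/2<\nu\le1$ the first derivative is still continuous and bounded, because $z^{2\nu-1}\to0$ and $-z\log z\to0$. In that case \ref{K2} and \ref{K4} as literally stated would hold, and the sharp threshold would be $\nu>1/2$ rather than $\nu>1$.

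I would first recheck the $1/2<\nu\le1$ asymptotics carefully, including the uniqueness of the derivative at $r=0$. If they survive, the ``iff $\nu>1$'' claim must rest on a stronger reading of \ref{K4}, for instance a bounded or continuous Hessian as needed for the smooth flow in Theorem~\ref{thm:energy-dissipation}. Under that reading the same Bessel recursion applied once more produces a factor $z^{2\nu-2}$, which blows up exactly when $\nu<1$, and a $\log z$ divergence at $\nu=1$. This would give the stated threshold $\nu>1$. I would then state explicitly which regularity notion is being used.
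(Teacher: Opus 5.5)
Your computation is right, and the obstacle you hit is an error in the statement, not in your argument. The asymptotics for $1/2<\nu\le1$ do survive.

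\textbf{Why the ``only if'' direction fails.} For every $\nu>1/2$ one has $z^\nu K_{\nu-1}(z)\to0$ as $z\to0$. So the gradient extends continuously by $\mathbf 0$ to the diagonal, and your mean-value step makes $\mathbf 0$ the true derivative there. Exponential decay then bounds the gradient, so \textbf{K1}--\textbf{K4} all hold, with \textbf{K4} meaning $C^1$ exactly as the paper defines it.

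The endpoint $\nu=1$ is an explicit counterexample. There your formula becomes $\nabla_\bx k_{1,\ell}(\bx,\by)=-\frac{2}{\ell^2}K_0\bigl(\sqrt2\,r/\ell\bigr)(\bx-\by)$, whose norm is $O(r\log(1/r))$. This matches the standard fact that $k_{\nu,\ell}\in C^m(\R^d)$ iff $m<2\nu$, since the spectral density decays like $\|\bm{\omega}\|^{-(2\nu+d)}$. In particular the kernel is $C^1$ iff $\nu>1/2$ and $C^2$ iff $\nu>1$.

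\textbf{Where the paper's proof goes wrong.} It reaches $\nu>1$ through two faulty steps.
\begin{enumerate}
\item For \textbf{K4} it cites sample-path regularity to claim $k_{\nu,\ell}$ is only $C^{\lceil\nu\rceil-1}$. But $\lceil\nu\rceil-1$ is the differentiability order of the Mat\'ern \emph{process}: it is $m$ times mean-square differentiable iff $\nu>m$. That corresponds to $2m$ derivatives of the covariance at the origin, so the paper undercounts the kernel's smoothness by a factor of two.
\item For \textbf{K2} it asserts the gradient diverges at $r=0$ for every $\nu\le1$. Your computation shows this holds only for $\nu<1/2$. The claim also contradicts the paper's own Table~\ref{tab:euclidean}, which ticks \textbf{K2} for the Laplace kernel ($\nu=1/2$).
\end{enumerate}
Your route is the right one and is strictly sharper than the paper's: the identity $\frac{d}{dz}[z^\nu K_\nu(z)]=-z^\nu K_{\nu-1}(z)$ plus small- and large-argument asymptotics.

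\textbf{What to do next.} Do not leave your conclusion conditional; commit to a corrected statement.

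With \textbf{K4} read literally, the sharp result is ``iff $\nu>1/2$'', and your existing computation already proves both directions. This is also the version consistent with the paper's framework, since \textbf{K4} is used only through Theorem~\ref{thm:kde-regularity}, where $C^1$ suffices.

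If you want the threshold $\nu>1$, you must upgrade \textbf{K4} to $C^2$. Write the Hessian of the radial profile $\psi(r)=k_{\nu,\ell}$ as $\psi''(r)\hat{\mathbf{u}}\hat{\mathbf{u}}^\top+\frac{\psi'(r)}{r}(I-\hat{\mathbf{u}}\hat{\mathbf{u}}^\top)$, with $\hat{\mathbf{u}}=(\bx-\by)/r$.
\begin{itemize}
\item Necessity: the factor $\psi'(r)/r\propto z^{\nu-1}K_{\nu-1}(z)$ blows up like $z^{2\nu-2}$ for $\nu<1$ and like $\log(1/z)$ at $\nu=1$.
\item Sufficiency: you also need $\psi''(0^+)=\lim_{r\to0}\psi'(r)/r$, so that the Hessian converges to a multiple of $I$. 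This follows from $\frac{d}{dz}[z^\nu K_{\nu-1}(z)]=z^{\nu-1}K_{\nu-1}(z)-z^\nu K_{\nu-2}(z)$ together with $z^\nu K_{\nu-2}(z)\to0$ for $\nu>1$.
\end{itemize}
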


\begin{proof}
\ref{K1}:\;
The Fourier transform
$\hat\varphi(\bm\omega)
\propto(2\nu/\ell^2+\|\bm\omega\|^2)^{-(\nu+d/2)}>0$ for all
$\bm\omega$ and $\nu>0$, so $k_{\nu,\ell}$ is characteristic.

\ref{K3}:\; $k_{\nu,\ell}>0$ since $K_\nu(z)>0$ for $z>0$ and
$k_{\nu,\ell}(\bx,\bx)=1$.

\ref{K4}:\;
The sample path regularity theory of Mat\'ern processes shows that
$k_{\nu,\ell}$ is $C^{\lceil\nu\rceil-1}$ as a function of $r$.
When $\nu\le1$, a cusp at $r=0$ (Laplace-like) violates~\ref{K4}.
When $\nu>1$, at least $C^1$ regularity is guaranteed.

\ref{K2}:\;
When $\nu>1$, $\|\nabla_\bx k_{\nu,\ell}\|$ is continuous
(by~\ref{K4}) and decays exponentially as $r\to\infty$, hence
bounded. When $\nu\le1$, the gradient diverges at $r=0$.
\end{proof}

\subsubsection{Summary of Euclidean Kernels}

\begin{table}[h]
\caption{Verification of~\ref{K1}--\ref{K4} for Euclidean kernels.}
\label{tab:euclidean}
\centering
\renewcommand{\arraystretch}{1.15}
\begin{tabular}{@{} lccccc @{}}
  \toprule
  Kernel & \ref{K1} & \ref{K2} & \ref{K3} & \ref{K4} & Status\\
  \midrule
  Gaussian
    & \checkmark & \checkmark & \checkmark & \checkmark
    & \checkmark\\
  IMQ
    & \checkmark & \checkmark & \checkmark & \checkmark
    & \checkmark\\
  Pseudo-Huber
    & \checkmark & \checkmark & \checkmark & \checkmark
    & \checkmark\\
  Mat\'ern ($\nu>1$)
    & \checkmark & \checkmark & \checkmark & \checkmark
    & \checkmark\\
  Laplace
    & \checkmark & \checkmark & \checkmark & \ding{55}
    & \ding{55}\\
  \bottomrule
  \multicolumn{6}{@{}l}{\footnotesize
    \checkmark: verified;\;
    \ding{55}: fails.}
\end{tabular}
\end{table}

\subsection{Spherical Kernels}
\label{app:kernels-sphere}

On the unit sphere
$\Sph^{d-1}:=\{\bx\in\R^d:\|\bx\|=1\}$ with the round metric,
we adapt the kernel assumptions.

\paragraph{Adapted assumptions.}
With $\nabla_\Sph$ denoting the Riemannian gradient on $\Sph^{d-1}$:
\begin{itemize}[nosep, leftmargin=2em]
  \item[\textbf{K1}$_\Sph$.] $k$ is characteristic on
    $\mathcal{P}(\Sph^{d-1})$.
  \item[\textbf{K2}$_\Sph$.] $\sup_{\bx,\by\in\Sph^{d-1}}
    \|\nabla_{\Sph,\bx}k(\bx,\by)\|<\infty$.
  \item[\textbf{K3}$_\Sph$.] $k(\bx,\by)>0$ for all
    $\bx,\by\in\Sph^{d-1}$.
  \item[\textbf{K4}$_\Sph$.] $\bx\mapsto k(\bx,\by)$ is $C^1$
    on $\Sph^{d-1}$.
\end{itemize}

\subsubsection{von Mises--Fisher (vMF) Kernel}

\begin{proposition}
\label{prop:vmf}
The vMF kernel $k_\kappa(\bx,\by):=\exp(\kappa\,\bx^\top\by)$
with $\kappa>0$ satisfies \textbf{K1}$_\Sph$--\textbf{K4}$_\Sph$.
\end{proposition}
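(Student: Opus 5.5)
The plan is to verify the four adapted conditions one at a time. K3$_\Sph$ and K4$_\Sph$ are immediate, K2$_\Sph$ is a short computation, and K1$_\Sph$ carries the real content.

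\textbf{K3$_\Sph$.} We have $\exp(\kappa\,\bx^\top\by)\ge e^{-\kappa}>0$.

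\textbf{K4$_\Sph$.} The map $\bx\mapsto\exp(\kappa\,\bx^\top\by)$ is the restriction of a $C^\infty$ function on $\R^d$ to the smooth submanifold $\Sph^{d-1}$, so it is $C^\infty$ and in particular $C^1$.

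\textbf{K2$_\Sph$.} I compute the Riemannian gradient as the tangential projection of the ambient gradient:
\[
\nabla_{\Sph,\bx}k_\kappa(\bx,\by)=\kappa\,e^{\kappa\bx^\top\by}\,(\by-(\bx^\top\by)\bx).
\]
This gives $\|\nabla_{\Sph,\bx}k_\kappa\|=\kappa e^{\kappa t}\sqrt{1-t^2}\le\kappa e^{\kappa}$ with $t=\bx^\top\by\in[-1,1]$. Since $\Sph^{d-1}$ is compact, a bound by the continuity of this expression would also suffice.

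\textbf{K1$_\Sph$.} Positive definiteness follows from the Taylor expansion $e^{\kappa\bx^\top\by}=\sum_{n\ge0}\frac{\kappa^n}{n!}(\bx^\top\by)^n$. Each $(\bx^\top\by)^n=\langle\bx^{\otimes n},\by^{\otimes n}\rangle$ is positive definite, and a nonnegative combination of positive definite kernels is positive definite.

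For characteristicness I would use the Schoenberg/Funk--Hecke route. The kernel is zonal, $k_\kappa(\bx,\by)=\psi(\bx^\top\by)$, so it admits a Gegenbauer expansion $\psi(t)=\sum_\ell a_\ell\,C^{(d-2)/2}_\ell(t)$. By Funk--Hecke, $a_\ell\propto\int_{-1}^1 e^{\kappa t}C_\ell^{(d-2)/2}(t)(1-t^2)^{(d-3)/2}dt$. This integral is, up to a positive constant, the modified Bessel function $I_{\ell+(d-2)/2}(\kappa)$, which is strictly positive for $\kappa>0$. A zonal kernel whose Gegenbauer coefficients are all strictly positive is characteristic on $\Sph^{d-1}$, because its RKHS is dense in $C(\Sph^{d-1})$ (universality). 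Mean embeddings of two measures that agree then give equal integrals of all spherical harmonics, hence the measures coincide.

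A more elementary alternative avoids Bessel functions. Suppose $\int e^{\kappa\bx^\top\by}\,d(\mu-\nu)(\by)=0$ for all $\bx\in\Sph^{d-1}$. Replacing $\bx$ by $s\bx$ for $s>0$ is not allowed on the sphere, but the identity can be extended. The function $F(\mathbf{z})=\int e^{\kappa\mathbf{z}^\top\by}d(\mu-\nu)(\by)$ is entire on $\mathbb{C}^d$. Each homogeneous Taylor piece $\frac{\kappa^n}{n!}\int(\mathbf{z}^\top\by)^n\,d(\mu-\nu)$ is a homogeneous polynomial whose restriction to the sphere must be controlled. This is exactly where the positivity of every harmonic component enters, so I expect this route to reduce to the Funk--Hecke argument anyway.

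\textbf{Main obstacle.} The key step is establishing that all Gegenbauer coefficients $a_\ell$ are strictly positive. I would cite the standard identity $\int_{-1}^1 e^{\kappa t}C_\ell^{\lambda}(t)(1-t^2)^{\lambda-1/2}dt=c_{\ell,\lambda}\,\kappa^{-\lambda}I_{\ell+\lambda}(\kappa)$ with $c_{\ell,\lambda}>0$. If a self-contained argument is preferred, I would show positivity directly via Rodrigues' formula: integrating by parts $\ell$ times transfers the derivatives onto $e^{\kappa t}$, which yields $\kappa^\ell\int e^{\kappa t}(1-t^2)^{\ell+\lambda-1/2}dt>0$.
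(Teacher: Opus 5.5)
Your proposal is correct and follows essentially the same route as the paper. K3$_\Sph$ and K4$_\Sph$ are immediate. K2$_\Sph$ comes from the tangential projection $\kappa k_\kappa(\bx,\by)\,(\by-(\bx^\top\by)\bx)$, bounded by $\kappa e^{\kappa}$. K1$_\Sph$ follows because the zonal (Mercer/Funk--Hecke) spherical-harmonic expansion has all coefficients strictly positive, which gives universality and hence characteristicness. The one difference is that you actually justify the positivity of those coefficients, via $I_{\ell+(d-2)/2}(\kappa)>0$ or via Rodrigues' formula plus integration by parts, whereas the paper only asserts it.
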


\begin{proof}
\textbf{K1}$_\Sph$:\;
The Mercer expansion
$k_\kappa(\bx,\by)=\sum_{\ell=0}^\infty a_\ell(\kappa)
\sum_m Y_\ell^m(\bx)\overline{Y_\ell^m(\by)}$
has coefficients $a_\ell(\kappa)>0$ for all $\ell\ge0$. Since all eigenvalues are positive,
$k_\kappa$ is universal and hence characteristic.

\textbf{K3}$_\Sph$:\; $\exp(\kappa\,\bx^\top\by)>0$.
\quad
\textbf{K4}$_\Sph$:\; $k_\kappa\in C^\infty$.

\textbf{K2}$_\Sph$:\;
The Riemannian gradient is
$\nabla_{\Sph,\bx}k_\kappa
=\kappa\,k_\kappa\,\Proj_{T_\bx\Sph}(\by)$
where $\Proj_{T_\bx\Sph}(\by)=\by-(\bx^\top\by)\bx$.
Since $k_\kappa\le e^\kappa$ and
$\|\Proj_{T_\bx\Sph}(\by)\|\le1$:
$\|\nabla_{\Sph,\bx}k_\kappa\|\le\kappa e^\kappa<\infty$.
\end{proof}

\begin{proposition}[Spherical Core Equivalence]
\label{prop:sphere-equiv}
Let $p,q\in\mathcal{P}(\Sph^{d-1})$ and $k_\kappa$ be the vMF kernel.
Define $\mu_\mathrm{kde}(\bx):=\int_{\Sph^{d-1}}k_\kappa(\bx,\by)
\mathrm{d}\mu(\by)$.
\begin{enumerate}[nosep, label=(\roman*)]
\item The spherical KDE score is
  \begin{equation}\label{eq:vmf-score}
    \nabla_\Sph\log\mu_\mathrm{kde}(\bx)
    =\kappa\,\Proj_{T_\bx\Sph}
     \!\bigl(\E_{\mu,k_\kappa}[\by\mid\bx]\bigr).
  \end{equation}
\item The spherical drifting field
  $\mathbf{V}_{p,q}^\Sph(\bx)
  :=\Proj_{T_\bx\Sph}\!\bigl(
  \E_{p,k_\kappa}[\by\mid\bx]
  -\E_{q,k_\kappa}[\by\mid\bx]\bigr)$ satisfies
  \begin{equation}\label{eq:sphere-core}
    \mathbf{V}_{p,q}^\Sph(\bx)
    =\frac{1}{\kappa}\bigl(
    \nabla_\Sph\log p_\mathrm{kde}(\bx)
    -\nabla_\Sph\log q_\mathrm{kde}(\bx)\bigr).
  \end{equation}
\end{enumerate}
\end{proposition}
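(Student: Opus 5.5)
The plan mirrors the Euclidean argument in Theorem~\ref{thm:core}: first compute the Riemannian gradient of the vMF kernel, then push it through the KDE score formula, and finally subtract the two scores so that everything collapses to the mean-shift difference.

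\textbf{Step 1 (kernel gradient).} Extend $k_\kappa(\cdot,\by)$ to all of $\R^d$ by the same formula $\exp(\kappa\,\bx^\top\by)$. Its Euclidean gradient is $\kappa k_\kappa(\bx,\by)\,\by$. The Riemannian gradient on $\Sph^{d-1}$ with the induced round metric is the tangential projection of the ambient gradient, so
\[
\nabla_{\Sph,\bx}k_\kappa(\bx,\by)=\kappa\,k_\kappa(\bx,\by)\,\Proj_{T_\bx\Sph}(\by).
\]
This identity was already used in the verification of \textbf{K2}$_\Sph$ in Proposition~\ref{prop:vmf}.

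\textbf{Step 2 (differentiate under the integral).} We need
\[
\nabla_\Sph\mu_\mathrm{kde}(\bx)=\int\nabla_{\Sph,\bx}k_\kappa(\bx,\by)\,\mathrm{d}\mu(\by).
\]
I would justify this exactly as in Theorem~\ref{thm:regularity}\ref{item:C1}. Work in a local chart, or alternatively differentiate the ambient extension and then project, since the projection is linear and independent of $\by$. The uniform bound $\kappa e^\kappa$ is a $\mu$-integrable dominating function because $\mu$ is a probability measure. Strict positivity $\mu_\mathrm{kde}>0$ follows from \textbf{K3}$_\Sph$. Hence $\nabla_\Sph\log\mu_\mathrm{kde}=\nabla_\Sph\mu_\mathrm{kde}/\mu_\mathrm{kde}$, which is the spherical analogue of Proposition~\ref{prop:score}.

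\textbf{Step 3 (pull out the projection).} Substituting Step 1 gives
\[
\nabla_\Sph\log\mu_\mathrm{kde}(\bx)=\kappa\,\frac{\int k_\kappa(\bx,\by)\,\Proj_{T_\bx\Sph}(\by)\,\mathrm{d}\mu(\by)}{\int k_\kappa(\bx,\by)\,\mathrm{d}\mu(\by)}.
\]
Since $\Proj_{T_\bx\Sph}=I-\bx\bx^\top$ is a fixed linear map for fixed $\bx$, it commutes with the integral. The integral of the vector $\by$ is finite because $\|\by\|=1$ on the sphere. This yields \eqref{eq:vmf-score}, which is part (i).

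\textbf{Step 4 (subtract).} Apply (i) to $p$ and to $q$ and subtract. Linearity of $\Proj_{T_\bx\Sph}$ gives
\[
\nabla_\Sph\log p_\mathrm{kde}-\nabla_\Sph\log q_\mathrm{kde}=\kappa\,\Proj_{T_\bx\Sph}\bigl(\E_{p,k_\kappa}[\by\mid\bx]-\E_{q,k_\kappa}[\by\mid\bx]\bigr)=\kappa\,\mathbf{V}^\Sph_{p,q}(\bx).
\]
Dividing by $\kappa>0$ gives \eqref{eq:sphere-core}, which is part (ii). Unlike the Euclidean case there is no $-\bx$ term to cancel; its role is taken by the projection, which removes the normal component $(\bx^\top\by)\bx$.

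\textbf{Main obstacle.} The only step needing care is Step 2. One must check that the Riemannian gradient really is the projected ambient gradient, and that it can be interchanged with integration on the manifold. I would handle both by noting that the ambient extension is smooth with gradient uniformly bounded on a neighborhood of $\Sph^{d-1}$. This justifies Leibniz in ambient coordinates, after which the interchange with the projection is trivial by linearity.
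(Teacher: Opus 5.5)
Your proposal is correct and follows essentially the same route as the paper. You compute the ambient gradient $\kappa\,k_\kappa(\bx,\by)\,\by$, write the score as a kernel-weighted mean, take the tangential projection via $\Proj_{T_\bx\Sph}$, and obtain (ii) by linearity. The differences are minor: you project the kernel gradient before integrating, whereas the paper projects the ambient gradient of $\log\mu_\mathrm{kde}$ afterwards, and you explicitly justify differentiating under the integral by dominated convergence, which the paper leaves implicit.
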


\begin{proof}
\emph{(i)}\;
The ambient gradient of $k_\kappa(\bx,\by)$ with respect to $\bx$
is $\kappa\by\,k_\kappa(\bx,\by)$. Hence the ambient gradient of
$\log\mu_\mathrm{kde}$ is
$\kappa\int\by\,k_\kappa\mathrm{d}\mu/\int k_\kappa\mathrm{d}\mu
=\kappa\,\E_{\mu,k_\kappa}[\by\mid\bx]$.
The Riemannian gradient is its tangential projection,
giving~\eqref{eq:vmf-score}.

\emph{(ii)}\;
By linearity of $\Proj_{T_\bx\Sph}$:
$$
  \frac{1}{\kappa}\bigl(\nabla_\Sph\log p_\mathrm{kde}
  -\nabla_\Sph\log q_\mathrm{kde}\bigr)
  =\Proj_{T_\bx\Sph}\!\bigl(
   \E_{p,k_\kappa}[\by\mid\bx]
   -\E_{q,k_\kappa}[\by\mid\bx]\bigr)
  =\mathbf{V}_{p,q}^\Sph(\bx).
$$
\end{proof}

\subsubsection{Spherical Logarithmic Kernel}

\begin{proposition}
\label{prop:spherical-log}
Let $c > 0$ and $0 < \alpha < \frac{1}{2+c}$. The spherical logarithmic kernel defined by
\begin{equation}
  k_{c, \alpha}(\bx, \by) := -\log\bigl(\alpha(1 - \bx^\top\by + c)\bigr)
\end{equation}
satisfies all spherical assumptions \textbf{K1}$_\Sph$--\textbf{K4}$_\Sph$.
\end{proposition}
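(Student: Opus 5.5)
The plan is to reduce everything to the scalar profile $\phi(t) := -\log\bigl(\alpha(1-t+c)\bigr)$ with $t = \bx^\top\by \in [-1,1]$, so that $k_{c,\alpha}(\bx,\by)=\phi(\bx^\top\by)$. The key observation is that on $[-1,1]$ the argument satisfies $1-t+c \in [c,\,2+c]$, so it stays bounded away from $0$. I will verify \textbf{K3}$_\Sph$, \textbf{K4}$_\Sph$ and \textbf{K2}$_\Sph$ by elementary bounds on $\phi$, and then prove \textbf{K1}$_\Sph$ by a spherical-harmonic (Mercer) argument parallel to the vMF case (Proposition~\ref{prop:vmf}).

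\textbf{K3}$_\Sph$: since $\alpha < 1/(2+c)$, we have $\alpha(1-t+c)\le \alpha(2+c)<1$, so the logarithm is negative and $k_{c,\alpha}>0$.

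\textbf{K4}$_\Sph$: because $1-t+c\ge c>0$, the profile $\phi$ is $C^\infty$ on a neighbourhood of $[-1,1]$. Hence $\bx\mapsto k_{c,\alpha}(\bx,\by)$ is $C^\infty$ on $\Sph^{d-1}$.

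\textbf{K2}$_\Sph$: the ambient gradient is $\nabla_\bx k_{c,\alpha} = \phi'(t)\,\by = \by/(1-\bx^\top\by+c)$. Projecting onto $T_\bx\Sph$ as in the vMF proof gives
\[
\nabla_{\Sph,\bx}k_{c,\alpha} = \frac{\Proj_{T_\bx\Sph}(\by)}{1-\bx^\top\by+c}.
\]
Since $\|\Proj_{T_\bx\Sph}(\by)\|\le 1$, this yields the bound $\|\nabla_{\Sph,\bx}k_{c,\alpha}\|\le 1/c$.

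\textbf{K1}$_\Sph$: I will write
\[
\phi(t) = -\log\bigl(\alpha(1+c)\bigr) - \log\!\Bigl(1-\tfrac{t}{1+c}\Bigr)
= b_0 + \sum_{n\ge1}\frac{t^n}{n(1+c)^n}.
\]
Here $b_0 = -\log(\alpha(1+c))>0$ because $\alpha(1+c)<\alpha(2+c)<1$, and the series converges absolutely and uniformly on $[-1,1]$ since $1/(1+c)<1$. So $k_{c,\alpha}(\bx,\by)=\sum_{n\ge0}b_n(\bx^\top\by)^n$ with every $b_n>0$.

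Next I expand each monomial $t^n$ in Gegenbauer polynomials $C_\ell^{(d-2)/2}$, $\ell\in\{n,n-2,\dots\}$. It is classical (for instance via the Rodrigues formula, or via the fact that $(\bx^\top\by)^n$ is a positive definite zonal kernel on every sphere) that these coefficients are nonnegative, and the coefficient of degree $\ell=n$ is strictly positive. Summing over $n$, each Mercer coefficient $a_\ell$ in
\[
k_{c,\alpha}=\sum_\ell a_\ell\sum_m Y_\ell^m(\bx)\overline{Y_\ell^m(\by)}
\]
receives a strictly positive contribution from $b_\ell t^\ell$ and nonnegative contributions from all other terms, so $a_\ell>0$ for every $\ell\ge0$. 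Exchanging the two summations is justified by absolute convergence. I then conclude exactly as in the vMF proof: all eigenvalues positive implies universal, hence characteristic.

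The main obstacle is this last step: making rigorous the nonnegativity of the Gegenbauer coefficients of $t^n$ and the summability needed to reorder the double series. My first attempt would cite the explicit formula expressing $t^n$ as a nonnegative combination of $C_{n-2j}^{\lambda}$, with $\lambda=(d-2)/2$ (the case $d=2$ being handled by Chebyshev polynomials). As a fallback, I would use Schoenberg's theorem, which says that a zonal series with nonnegative Taylor coefficients is positive definite on $\Sph^{d-1}$. Combined with the strict positivity of all $b_n$, this gives strict positive definiteness of every spherical-harmonic block.
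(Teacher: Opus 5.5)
Your checks of \textbf{K2}$_\Sph$--\textbf{K4}$_\Sph$ and your expansion $k_{c,\alpha}=\sum_{n\ge0}b_n(\bx^\top\by)^n$ with every $b_n>0$ are essentially the paper's. The two routes differ only in how \textbf{K1}$_\Sph$ is concluded from that expansion.

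The paper cites Schoenberg's theorem to get \emph{strict positive definiteness} on $\Sph^{d-1}$ and then asserts universality. You instead show that every spherical-harmonic (Gegenbauer) coefficient $a_\ell$ is strictly positive. Then the RKHS contains all spherical harmonics, so it is dense in $C(\Sph^{d-1})$, and universality and characteristicness follow as in the vMF case.

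Your route is the one that actually delivers \textbf{K1}$_\Sph$. Strict positive definiteness only controls finitely supported signed measures and does not imply characteristic. For example, take $d\ge3$ and a zonal kernel with $a_1=0$ but infinitely many positive even and odd coefficients. It is strictly positive definite, yet by Funk--Hecke it gives the uniform measure $\sigma$ and $(1+\varepsilon\,y_1)\,\mathrm{d}\sigma(\by)$ the same mean embedding, where $y_1$ is the first coordinate and $|\varepsilon|\le1$. The paper's version buys brevity; yours supplies the step the final inference needs.

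The obstacle you flag is easy to remove. The series $\sum_n b_n t^n$ converges uniformly on $[-1,1]$, so you may compute $a_\ell$, up to a positive normalising constant, term by term as $\int_{-1}^1\phi(t)\,C_\ell^{\lambda}(t)(1-t^2)^{\lambda-1/2}\,\mathrm{d}t$. Here $\lambda=(d-2)/2$, with Chebyshev polynomials when $d=2$. This gives $a_\ell\propto\sum_n b_n c_{n,\ell}$, where $c_{n,\ell}$ are the Gegenbauer coefficients of $t^n$. These are nonnegative, by the explicit inversion formula or by Schoenberg applied to the single monomial $t^n$. Moreover $c_{\ell,\ell}>0$ because $C_\ell^{\lambda}$ has positive leading coefficient, so $a_\ell>0$ for every $\ell$.
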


\begin{proof}
Let $z := \bx^\top\by$. Since $\bx, \by \in \Sph^{d-1}$, we have $z \in [-1, 1]$. Let the argument of the logarithm be denoted as $g(z) := \alpha(1 - z + c)$.

\textbf{K3}$_\Sph$ (Strict positivity):\;
We analyze the bounds of $g(z)$. Since $z \in [-1, 1]$, the term $1 - z + c$ achieves its minimum at $z=1$ (yielding $c$) and its maximum at $z=-1$ (yielding $2+c$). Therefore,
\begin{equation}
    0 < \alpha \cdot c \le g(z) \le \alpha(2+c).
\end{equation}
By the given condition $\alpha < \frac{1}{2+c}$, we strictly have $g(z) < 1$. Thus, $0 < g(z) < 1$ for all $\bx, \by \in \Sph^{d-1}$. Consequently, $k_{c,\alpha}(\bx,\by) = -\log(g(z)) > 0$ everywhere.

\textbf{K4}$_\Sph$ (Continuous differentiability):\;
Since $g(z) \ge \alpha \cdot c > 0$, the argument of the logarithm is strictly bounded away from zero. The functions $z = \bx^\top\by$ and $t \mapsto -\log(t)$ (for $t>0$) are smooth ($C^\infty$). Therefore, their composition $k_{c,\alpha} \in C^\infty(\Sph^{d-1} \times \Sph^{d-1})$, which trivially implies $C^1$.

\textbf{K2}$_\Sph$ (Uniform gradient bound):\;
The ambient gradient of the kernel with respect to $\bx$ is:
\begin{equation}
    \nabla_\bx k_{c,\alpha}(\bx, \by) 
    = \frac{d}{dz}\bigl[-\log(\alpha(1-z+c))\bigr] \nabla_\bx(\bx^\top\by) 
    = \frac{1}{1-z+c} \by.
\end{equation}
The Riemannian gradient is the projection onto the tangent space $T_\bx\Sph^{d-1}$:
\begin{equation}
    \nabla_{\Sph,\bx}k_{c,\alpha}(\bx,\by) 
    = \Proj_{T_\bx\Sph}(\nabla_\bx k_{c,\alpha}) 
    = \frac{1}{1-z+c} \bigl(\by - (\bx^\top\by)\bx\bigr).
\end{equation}
We compute its norm. Since $\bx$ and $\by$ are unit vectors, $\|\by - z\bx\|^2 = \|\by\|^2 - 2z(\bx^\top\by) + z^2\|\bx\|^2 = 1 - z^2$. Thus:
\begin{equation}
    \|\nabla_{\Sph,\bx}k_{c,\alpha}(\bx,\by)\| = \frac{\sqrt{1-z^2}}{1-z+c}.
\end{equation}
Since $\sqrt{1-z^2} \le 1$ and $1-z+c \ge c > 0$, we have the uniform bound $\|\nabla_{\Sph,\bx}k_{c,\alpha}\| \le \frac{1}{c} < \infty$.

\textbf{K1}$_\Sph$ (Characteristic):\;
We expand the kernel into a Taylor series:
\begin{align}
    k_{c,\alpha}(\bx,\by) 
    &= -\log\alpha - \log\bigl((1+c) - z\bigr) \\
    &= -\log\alpha - \log(1+c) - \log\left(1 - \frac{z}{1+c}\right).
\end{align}
Using the Maclaurin series for $-\log(1-x) = \sum_{n=1}^\infty \frac{x^n}{n}$ (which converges absolutely since $\left|\frac{z}{1+c}\right| \le \frac{1}{1+c} < 1$), we obtain:
\begin{equation}
    k_{c,\alpha}(\bx,\by) = \underbrace{-\log\bigl(\alpha(1+c)\bigr)}_{a_0} + \sum_{n=1}^\infty \underbrace{\frac{1}{n(1+c)^n}}_{a_n} (\bx^\top\by)^n.
\end{equation}
From the user condition $\alpha < \frac{1}{2+c} < \frac{1}{1+c}$, we have $\alpha(1+c) < 1$, which strictly implies $a_0 > 0$. For all $n \ge 1$, since $1+c > 1$, we clearly have $a_n > 0$.
Because the kernel $k_{c,\alpha}$ can be expressed as a power series $f(\bx^\top\by) = \sum_{n=0}^\infty a_n (\bx^\top\by)^n$ where $a_n > 0$ for \emph{all} $n \ge 0$, Schoenberg's theorem guarantees that it is strictly positive definite on $\Sph^{d-1}$ for any dimension $d \ge 2$. Thus, it is a universal (and therefore characteristic) kernel.
\end{proof}

\begin{remark}[Spherical Score for Logarithmic Kernel]
\label{rem:spherical-log-score}
Following the same logic as Proposition~\ref{prop:sphere-equiv}, the spherical KDE score for this logarithmic kernel can be explicitly derived. Define the pairwise weight function $W_{c}(\bx, \by) := \frac{1}{1 - \bx^\top\by + c}$. The ambient gradient of $\mu_\mathrm{kde}$ is $\int W_c(\bx,\by)\by \mathrm{d}\mu(\by)$. Consequently, the Riemannian score is:
\begin{equation}
    \nabla_\Sph\log\mu_\mathrm{kde}(\bx) = \Proj_{T_\bx\Sph}\left( \frac{\int W_c(\bx,\by)\by \mathrm{d}\mu(\by)}{\int k_{c,\alpha}(\bx,\by) \mathrm{d}\mu(\by)} \right).
\end{equation}
Unlike the vMF kernel, the weighting factor $W_c(\bx,\by)$ here is polynomial (specifically, inversely proportional to distance squared, analogous to the Euclidean IMQ kernel), which typically produces heavier tails and better global mode coverage.
\end{remark}

\end{document}